\newtheorem{theorem}{Theorem}
\newtheorem{theorem1}{Theorem}
\newtheorem{lemma}[theorem1]{Lemma}
\theoremstyle{definition}
\def\eq#1{\begin{equation*}\begin{split}#1\end{split}\end{equation*}}
\def\eql#1#2{\begin{equation}{#1}\begin{split}#2\end{split}\end{equation}}
\def\pr#1{\left( #1 \right ) }
\def\br#1{\left[ #1 \right ] }
\def\dr#1{\left\{#1\right\}}
\def\aa{\pmb{\mathit{a}}}
\newcommand\kk{\boldsymbol{\mathit{k}}}
\newcommand\mm{\boldsymbol{\mathit{m}}}
\newcommand\oo{\boldsymbol{\mathit{o}}}
\newcommand\pp{\boldsymbol{\mathit{p}}}
\newcommand\qq{\boldsymbol{\mathit{q}}}
\renewcommand\ss{\boldsymbol{\mathit{s}}}
\newcommand\uu{\boldsymbol{\mathit{u}}}
\newcommand\vv{\boldsymbol{\mathit{v}}}
\newcommand\xx{\boldsymbol{\mathit{x}}}
\renewcommand\AA{\boldsymbol{\mathit{A}}}
\newcommand\DD{\boldsymbol{\mathit{D}}}
\newcommand\EE{\boldsymbol{\mathit{E}}}
\newcommand\FF{\boldsymbol{\mathit{F}}}
\newcommand\GG{\boldsymbol{\mathit{G}}}
\newcommand\KK{\boldsymbol{\mathit{K}}}
\newcommand\MM{\boldsymbol{\mathit{M}}}
\newcommand\OO{\boldsymbol{\mathit{O}}}
\newcommand\PP{\boldsymbol{\mathit{P}}}
\newcommand\QQ{\boldsymbol{\mathit{Q}}}
\renewcommand\SS{\boldsymbol{\mathit{S}}}
\newcommand\UU{\boldsymbol{\mathit{U}}}
\newcommand\WW{\boldsymbol{\mathit{W}}}
\newcommand\VV{\boldsymbol{\mathit{V}}}
\newcommand\XX{\boldsymbol{\mathit{X}}}
\newcommand\YY{\boldsymbol{\mathit{Y}}}
\def\tp{^\top}
\def \diag#1{\textbf{diag}\left(#1\right)}
\newcommand{\zero}{\mathbf{0}}
\newcommand \inv{^{-1}}
\newcommand \Real{\mathbb{R}}
\newcommand\dhidden{d_{\rm model}}
\newcommand\dk{d}
\newcommand\dv{d}
\newcommand\Atten{{\rm Attention}}
\newcommand\atten{\text{attention}}
\newcommand\Causal{\text{CausalAttention}}
\newcommand\causal{\text{causal-attention}}
\newcommand\Multiheadatten{\text{Multi-head-Attention}}
\newcommand\multiheadatten{\text{multi-head-attention}}
\newcommand\rsoftmax{{\rm row\_softmax}}
\newcommand\concat{\text{Concat}}
\newcommand\nhead{n_{\text{head}}}
\newcommand\x{\times}
\def\prn#1{( #1 )}
\def\prb#1{\big( #1 \big)}
\def\prbb#1{\bigg( #1 \bigg)}
\def\prB#1{\Big( #1 \Big)}
\newcommand\sigmoid{{\rm sigmoid}}
\newcommand\softmax{{\rm softmax}}
\newcommand\SSU{\SS^U}
\newcommand\de{\mathrm{d}}
\newcommand\silu{{\rm SiLU}}
\def\bmn#1{$\bm{#1}$}
\title{Causal Attention with Lookahead Keys}
\author[1,2, *]{Zhuoqing Song}
\author[1]{Peng Sun}
\author[1]{Huizhuo Yuan}
\author[1, \dagger]{Quanquan Gu}
\affiliation[1]{ByteDance Seed}
\affiliation[2]{Princeton University}
\abstract{
    In standard causal attention, each token’s query, key, and value (QKV) are static and encode only preceding context. We introduce CAuSal aTtention with Lookahead kEys (CASTLE), an attention mechanism that continually updates each token’s keys as the context unfolds. We term these updated keys lookahead keys because they belong to earlier positions yet integrate information from tokens that appear later relative to those positions, while strictly preserving the autoregressive property. Although the mechanism appears sequential, we derive a mathematical equivalence that avoids explicitly materializing lookahead keys at each position and enables efficient parallel training. On language modeling benchmarks, CASTLE consistently outperforms standard causal attention across model scales, reducing validation perplexity and improving performance on a range of downstream tasks. 
}
\date{September 24, 2025}
\begin{document}
\maketitle

\section{Introduction}
Causal attention~\citep{vaswani2017attention} is a cornerstone of autoregressive sequence modeling, allowing each token to condition on its past while preserving the autoregressive structure that underpins language generation.
Building on this mechanism, large language models (LLMs) have transformed natural language processing by scaling up the model size and the number of tokens trained~\citep{radford2019language,brown2020language,kaplan2020scaling}. 
While this trend has delivered impressive capabilities, it increasingly runs up against a practical bottleneck, that is high-quality tokens. 
This reality makes it imperative to improve the attention layer itself and to develop model architectures that are more token-efficient, delivering stronger performance under fixed training-token budgets. 

In standard causal attention, each token’s query, key, and value (QKV) are computed from that token’s representation and remain fixed; they cannot incorporate information from subsequent tokens. As a result, a token’s QKV can encode only its preceding context. Recent work shows that such causal mask, which blocks each token's access to its future information,
limits models' ability to capture global context, 
and impairs natural language understanding~\citep{li2023bellm,du2022glm,springerrepetition,xu2024re,zhang2025diffusion}. 
Here are some vivid illustrations that give intuitions for the limitations of causal masking.  
    Garden-path sentences~\citep{pritchett1987garden} are structurally ambiguous, often inducing an incorrect initial parse. 
    For example, ``the old man the boat''.  
    Because garden-path sentences' correct interpretation typically depends on information that appears later in the sentence, the causal mask restricting tokens to past context can cause models to struggle in resolving such ambiguities effectively~\citep{amouyal2025lm}. 
    In many tasks, the question/focus appears at the end of the input. Without access to future context, earlier tokens cannot encode the relevant information needed to anticipate the question/focus. As a result, early token representations may fail to capture important cues and global context dependencies~\citep{xu2024re}. 

To address the shortcomings of standard causal attention in pretraining, we propose a novel attention mechanism, CAuSal aTtention with Lookahead kEys (CASTLE). 
In this approach, when generating the $(t+1)$-th token, we update keys of all preceding tokens so that keys of token $s$ ($1\leq s\leq t$) are able to additionally encode information from token $s+1$ to $t$. 
These keys are called lookahead keys. 
This design preserves the autoregressive structure while allowing the keys to evolve as the context unfolds. In Figure~\ref{fig:receptivekey}, we give an illustration of receptive fields of the keys in CASTLE.
Although the mechanism appears recurrent, we establish a mathematical equivalence that avoids explicitly materializing the lookahead keys and enables efficient parallel training. 
We evaluate our approach on language modeling across multiple model scales. Experimental results show that CASTLE consistently outperforms standard causal attention in terms of validation perplexity and downstream task performance. 
We also introduce a variant, CASTLE-SWL, which applies sliding windows to lookahead keys. CASTLE-SWL has the same complexities with CASTLE in training and inference and preserves the performance gains of CASTLE while further improves efficiency in practice.  

 \begin{figure}[h!]
    \centering
    \includegraphics[width=0.82\linewidth]{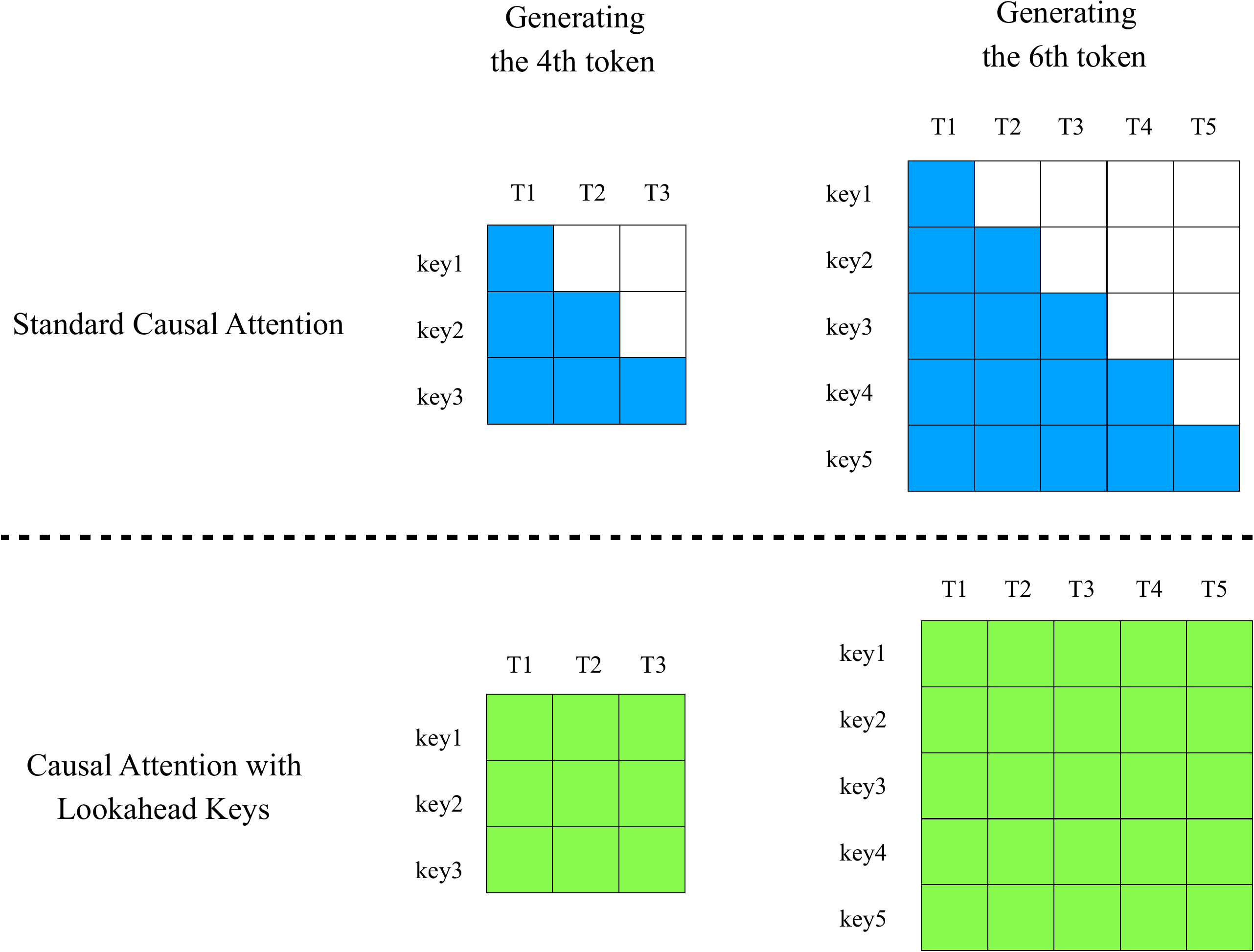}
    \caption{Receptive fields of keys in standard causal attention and CASTLE. The top row shows standard causal attention when generating the 4th token (left) and the 6th token (right). The bottom row shows CASTLE under the same settings. Here, key1, key2, $\cdots$ denote the keys corresponding to tokens 1, 2, $\cdots$, while $T_1$, $T_2$, $\cdots$ denote the tokens.
    In standard causal attention, keys are static: when generating the $(t+1)$-th token, each key $i$ with $1\leq i\leq t$ can only access information from $\dr{T_1, \cdots, T_i}$, and key $i$ remains the same for all later steps.   
    In contrast, CASTLE continuously updates keys at each prediction step, i.e., when generating the $(t+1)$-th token, the receptive field of any key $i$ with $1\leq i\leq t$, is expanded to contain information from $\dr{T_1, \cdots, T_t}$.     }
    \label{fig:receptivekey}
\end{figure} 

The remainder of this paper is organized as follows. We discuss related work in Section~\ref{sec:relwork}. In Section~\ref{sec:development1}, we elaborate on our motivations. 
In Section~\ref{sec:def}, we formally define CASTLE in its recurrent form. Section~\ref{sec:training} proves an equivalent parallel formulation of CASTLE and develops efficient parallel training algorithms. Section~\ref{sec:inference} introduces efficient inference algorithms together with the counterpart of the KV cache in CASTLE. Finally, Section~\ref{sec:experiment1} presents empirical results demonstrating the effectiveness of CASTLE across diverse tasks and model scales. 

\subsection{Related Work}\label{sec:relwork}
Several studies have observed that the causal mask, by preventing tokens from accessing future information, can hinder a model’s ability to capture global context and degrade its natural language understanding~\citep{li2023bellm,du2022glm,springerrepetition,xu2024re,zhang2025diffusion}.   

Much effort has been made to overcome this shortcoming in sentence embedding.
BeLLM (Backward Dependency Enhanced LLM)~\citep{li2023bellm} modifies decoder layers to enable sentence embeddings to integrate both past and future information. This yields substantial improvements on semantic textual similarity and downstream tasks. 
Echo Embeddings~\citep{springerrepetition} duplicate the input sequence and extract embeddings from the second occurrence, letting early tokens attend to later ones without modifying the architecture.
Similarly, Re-Reading (RE2)~\citep{xu2024re} prompts models to process inputs twice, so the second pass captures global information obtained in the first. These methods improve embedding quality and reasoning, but their benefits in large-scale pretraining remain unclear.

Selective Attention~\citep{leviathan2024selective} introduces a parameter-free modification where tokens can mask out irrelevant or outdated tokens from future attention. By subtracting accumulated selection scores from the attention logits, selective attention reduces reliance on unneeded context. As a result, it achieves substantial memory and compute savings without degrading perplexity. 
However, selective attention primarily emphasizes filtering unneeded past tokens to enhance efficiency. As discussed in the introduction, many scenarios, such as garden-path sentences or cases where the key information appears at the end of the input, require mechanisms that actively incorporate crucial future information. Whether selective attention can address this challenge remains uncertain. 

PaTH Attention~\citep{yang2025path} is a novel positional encoding scheme which introduces data-dependent encodings based on cumulative products of Householder-like transformations. Each transformation is conditioned on the input, enabling the model to dynamically adapt positional information as the sequence progresses. PaTH is related to CASTLE in the sense that in inference formulation of PaTH, keys are also updated via a rank-1 update. However, both the update mechanism of keys and the parallel training formulation in PaTH differ substantially from those in CASTLE, and PaTH remains fundamentally a positional encoding, making it orthogonal to our approach.  

Encoder-only Next Token Prediction (ENTP)~\citep{ewer2024entp} performs next-token prediction with encoder-only Transformers, where the keys are naturally re-computed at each position. It demonstrates stronger sample efficiency on small-scale language modeling and in-context learning benchmarks. However, the per-token compute in ENTP scales quadratically with sequence length and cubically over full sequences, which presents challenges for scaling.


\section{Causal Attention with Lookahead Keys}\label{sec:arbiattn}
Our motivations are discussed in Section~\ref{sec:development1}.
Formal mathematical definitions of CASTLE in recurrent form are provided in Section~\ref{sec:def}.
Direct application of the recurrent form of CASTLE in Section~\ref{sec:def} is impractical for large-scale pretraining.
To address this, we present efficient pretraining algorithms in Section~\ref{sec:training}. In Section~\ref{sec:inference}, we describe efficient inference algorithms along with the counterparts of the KV cache in CASTLE.

\subsection{Motivations}\label{sec:development1}
We first recall the standard causal attention. 
Given contextualized representations $\XX^L\in \Real^{L\x \dhidden}$ where $L$ is the sequence length and $\dhidden$ is the hidden dimension.  
The standard causal attention first computes $\QQ = \XX^L\WW_Q$, $\KK = \XX^L\WW_K$, $\VV = \XX^L\WW_V\in \Real^{L\x \dk}$, where $\dk$ is the head dimension. 
Then, the standard causal attention is computed as follows
\eql{\label{eq:standardcausal1}}{
    \Causal(\XX^L) = \rsoftmax\prB{\frac{\QQ\KK\tp}{\sqrt{\dk}} + \MM^C} \VV \in \Real^{L\x \dk},    
}
where $\MM^C \in \Real^{L\x L}$ is the causal mask which prevents each token from attending to its future tokens, i.e., $\MM^C_{ij} = 0$ if $i \geq j$ and $\MM^C_{ij} = -\infty$ otherwise. 

To explain our motivations, we begin with the recurrent form of standard causal attention. 
Consider the case when generating the $(t+1)$-th token.  
Given contextualized representations $\XX^t = 
    \begin{pmatrix}
        \xx_1 \\
        \xx_2 \\
        \vdots \\
        \xx_{t}
    \end{pmatrix} 
    \in \Real^{t\x \dhidden}$,
where the $s$-th row $\xx_s$ is the representation of token $s$. 
Unless otherwise specified, all vectors in this paper are treated as row vectors rather than column vectors.

The query, key and value of token $s$ are $\qq_s = \xx_s\WW_Q$, $\kk_s = \xx_s\WW_K$, $\vv_s = \xx_s\WW_V$. 
We also denote $\KK_t = \XX^t\WW_K$ and $\VV_t = \XX^t\WW_V$.
Then, the standard causal attention follows the recurrent form
\eql{\label{eq:standardcausal2}}{
    \causal(\XX^t) = \softmax\pr{\frac{\qq_{t}{\KK_t}\tp}{\sqrt{\dk}}} \VV_t = \frac{\sum_{s=1}^{t}\exp\prb{\qq_{t}{\kk_s}\tp / \sqrt{\dk}}\vv_s}{\sum_{s=1}^{t}\exp\prb{\qq_{t}{\kk_s}\tp / \sqrt{\dk}}} \in \Real^{1\x \dk}.       
} 
Due to the autoregressive structure, each $\xx_s$ only encodes information from token $1$ to $s$. 
Thus, when generating token $t+1$ with $t+1 > s$, each $\kk_s$ only contains information from token $1$ to $s$ without containing information from token $s+1$ to $t$. 
This can impair models' ability of natural language understanding, yielding high-quality text embedding and capturing global context as mentioned in the introduction. 

This motivates us to propose a novel attention mechanism, causal attention with lookahead keys (CASTLE), i.e., when generating the $(t+1)$-th token, we first update keys $\kk_s$ of preceding tokens $s$ with $s < t+1$ to additionally incorporate information from token $s+1$ to $t$. 
We refer to these as \emph{lookahead keys} because their representations renew with the growing context.
In this way, lookahead keys may lead to more accurate attention scores while preserving the autoregressive property.   

Before describing the details of this mechanism, we first answer the following questions.  

\noindent$\bullet$ \textbf{Why do we use lookahead keys instead of lookahead queries?} 
The answer parallels the reason why key–value (KV) pairs are cached instead of queries (Q). 
Each $\qq_s$ is only used once, namely when generating token $s+1$. 
Because we are designing an autoregressive model, past queries cannot be modified after generation, making it meaningless to update $\qq_s$. 
In contrast, $\kk_s$ is multiplied by the queries $\qq_t$ of all subsequent tokens $t \geq s$. 
Keeping $\kk_s$ updated therefore can benefit all later tokens by possibly producing more accurate attention scores. 

We also remark that updating the values $\vv_s$ similarly to lookahead keys could be beneficial while its efficient algorithm remains future study.    

\noindent$\bullet$ \textbf{How do we maintain the model autoregressive with lookahead keys?} 
When generating token $t+1$, we update keys of each preceding token $s < t+1$ with information from token $s+1$ to $t$. 
Thus, all keys only contain information from token $1$ to $t$. Queries and values are naturally only containing information from tokens up to $t$. 
No future information from tokens $k > t$ is used. 
Thus, the model maintains autoregressive property.

Further details of our design are presented in Section~\ref{sec:def}.

\subsection{Mathematical Definition in Recurrent Form}\label{sec:def}
We give mathematical definitions of CASTLE in this section. 
Let $L$ denote sequence length and $\dhidden$, $\dk$ denote the hidden dimension and head dimension, respectively.  

Throughout Section~\ref{sec:def}, we fix $t \in \dr{1, 2, \cdots, L}$ and consider the setting where $t$ tokens have been generated and the model is generating the $(t+1)$-th token. 
Denote the input contextualized representations $\XX^t = 
    \begin{pmatrix}
        \xx_1 \\
        \xx_2 \\
        \vdots \\
        \xx_{t}
    \end{pmatrix} 
    \in \Real^{t\x \dhidden}$,
where $\xx_s$ is the representation of token $s$.

Utilizing lookahead keys lies at the core of CASTLE.  
However, the way a model learns to encode information into keys of token $s$ from past tokens ($k \leq s$) may differ from how it encodes information from subsequent tokens (tokens $s < k < t+1$) when generating the $(t+1)$-th token.
To address this, we adopt a hybrid design. Specifically, we keep half of the keys the same as in standard causal which we call \emph{causal keys}, while allowing the remaining half to renew as the context progresses which we call \emph{lookahead keys}.                                

For each preceding token $s$ ($1\leq s < t+1$),
causal keys of token $s$ is a projection of $\xx_s$, while lookahead keys of token $s$ contain information from representations $\dr{\xx_{s+1}, \cdots, \xx_t}$. 
The receptive fields of causal keys and lookahead keys are illustrated in Figure~\ref{fig:keyreceptivefield}. 

\begin{figure}[ht!]
    \centering
    \includegraphics[width=0.82\linewidth]{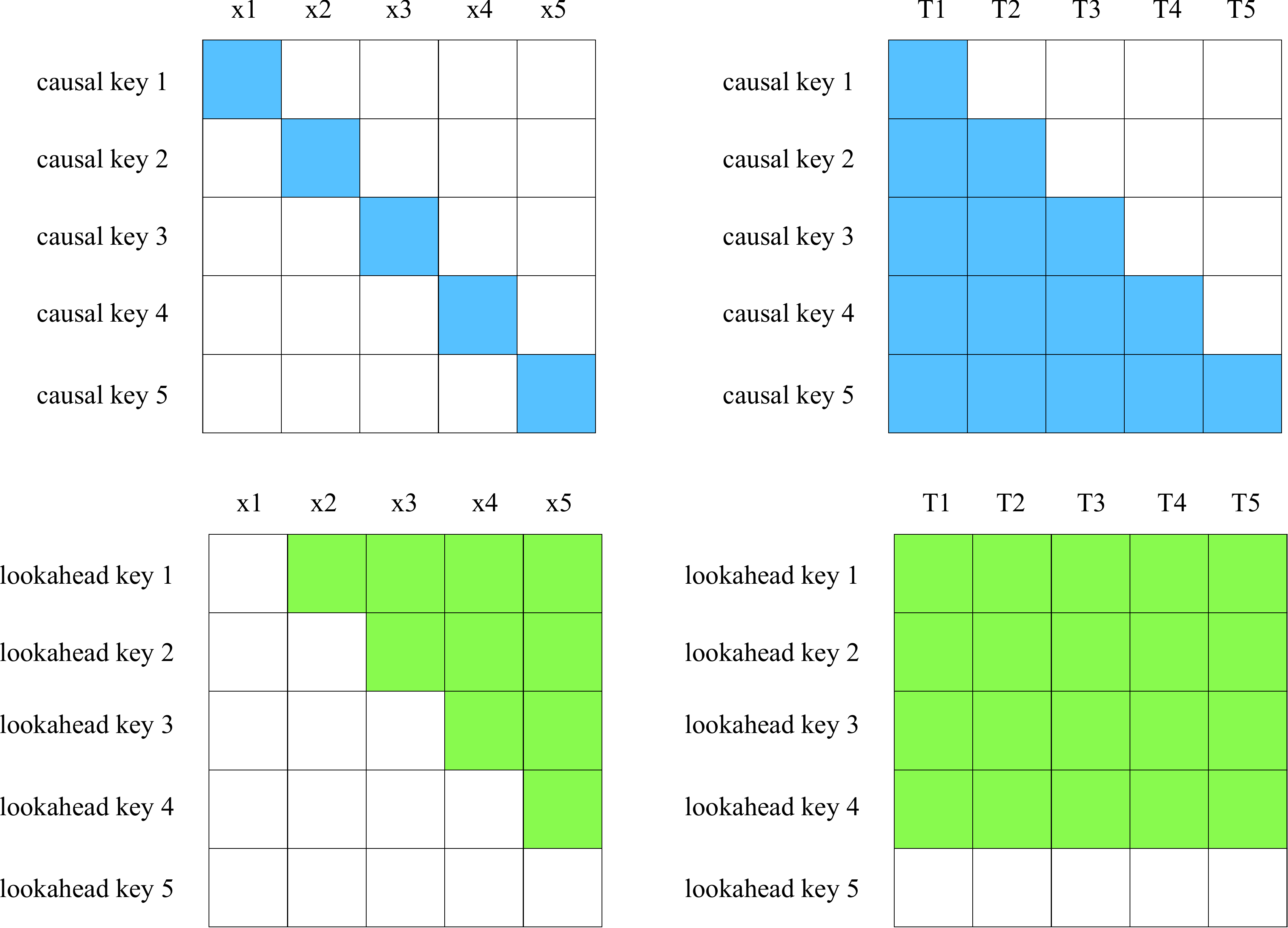}
    \caption{Receptive fields of causal keys and lookahead keys with respect to contextualized representations and tokens (excluding the first layer) when generating the 6th token. Tokens are denoted by $T_i$ and their contextualized representations by $\xx_i$. Causal key $i$ corresponds to the causal key of token $i$, while lookahead key $i$ corresponds to the lookahead key of token $i$. When generating the $(t+1)$-th token, for token $s$ ($s < t+1$), the causal key of token $s$ is a projection of $\xx_s$. Due to the softmax in attention, except in the first layer, causal keys of $s$ attend over tokens $\dr{T_1,\cdots,T_s}$. For token $s < t$, lookahead keys of $s$ incorporate information from $\dr{\xx_{s+1},\cdots,\xx_t}$ and attend over all existing tokens $\dr{T_1,\cdots,T_t}$. Since the last row of $\MM^U$ is defined as $[\MM^U_t]_{t, :} = (-\infty)^{1\x t}$, lookahead keys of token $t$ are all-zeros vectors and thus have empty receptive fields when generating the $(t+1)$-th token. }
    \label{fig:keyreceptivefield}
\end{figure}

We first project $\XX^t$ into key and value matrices $\KK^U_t$, $\VV^U_t$, $\KK^C_t$, $\VV^C_t \in \Real^{t\x \dk}$ by
\eq{
    \KK^U_t = \XX^t\WW_{K}^U \in \Real^{t\x \dhidden}, \ 
    \VV^U_t = \XX^t\WW_{V}^U \in \Real^{t\x \dhidden}, 
}
and 
\eq{
    \KK^C_t = \XX^t\WW_{K}^C \in \Real^{t\x \dhidden}, \ 
    \VV^C_t = \XX^t\WW_{V}^C \in \Real^{t\x \dhidden}    
}
as well as query matrix $\QQ^U_t = \XX^t\WW^U_Q\in \Real^{t\x \dk}$ and query vector $\qq^C_{t} = \xx_{t} \WW^C_Q \in \Real^{1\x \dk}$. 
Here, $\WW^U_Q$, $\WW^U_K$, $\WW^U_V$, $\WW^C_Q$, $\WW^C_K$, $\WW^C_V \in \Real^{\dhidden\x \dk}$ are learnable matrices.

The matrices $\KK^U_t$, $\VV^U_t$, $\QQ^U_t$ are used to generate the lookahead key $\UU^t$. Then, the causal key $\KK^C_t$ and the lookahead key $\UU^t$ are multiplied by the query vector $\qq^C_{t}$ to get the attention scores. Then, $\VV^C_t$ are multiplied by the attention weights to get the output. 
Before we elaborate on details in the definition of CASTLE, we first give formal definitions of causal keys and lookahead keys.

\noindent\textbf{Causal Keys.}
The causal keys in CASTLE are defined similarly to the keys in standard causal attention.
More specifically, causal keys are defined as 
\eq{
    \KK^C_t = 
    \begin{pmatrix}
        \kk^C_1 \\
        \kk^C_2 \\
        \vdots \\
        \kk^C_t
    \end{pmatrix} 
    = \XX^t\WW^C_K \in \Real^{t\x \dk}. 
}
The $s$-th row $\kk_s$ of $\KK^C_t$ satisfying $\kk_s = \xx_s\WW^C_K$ is the causal key of token $s$. 
Causal keys are static, i.e., the $s$-th rows of $\KK^C_t$ and $\KK^C_{t'}$ are equal to each other whenever $t, t' \geq s$.  

\noindent\textbf{Lookahead Keys.}   
We utilize a structure similar to the attention mechanism to define lookahead keys. 
An illustration for lookahead keys can be found in Figure~\ref{fig:lookaheadkey}.  

More specifically, the lookahead keys are defined as
\eql{\label{eq:Ut}}{
    \UU^t = 
    \begin{pmatrix}
      \uu^t_1 \\
      \uu^t_2 \\
      \vdots \\
      \uu^t_{t}
    \end{pmatrix}
    = \sigmoid\prb{\frac{\QQ^U_t {\KK^U_t}\tp}{\sqrt{\dk}} + \MM^U_t} \VV^U_t \in \Real^{t\x \dv},
}
where $\MM^U_t \in \Real^{t\x t}$ is a mask matrix
with 
\eql{\label{eq:MU1}}{
    [\MM^U_t]_{ij} 
    = \left\{ 
    \begin{array}{ll}
        0, & \text{if } i < j \\
        -\infty, & \text{otherwise}
    \end{array}
    \right.  
}
The $s$-th row $\uu^t_s$ of $\UU^t$ is the lookahead key of token $s$. We remark that in $\uu^t_s$, 
the superscript $t$ indicates that $\uu^t_s$ is defined when $t$ tokens have already been generated and we are about to generate the $(t+1)$-th token,    
while the subscript $s$ indicates $\uu^t_s$ is the $s$-th row of $\UU^t$. 

\begin{figure}[ht!]
    \centering
    \includegraphics[width=0.93\linewidth]{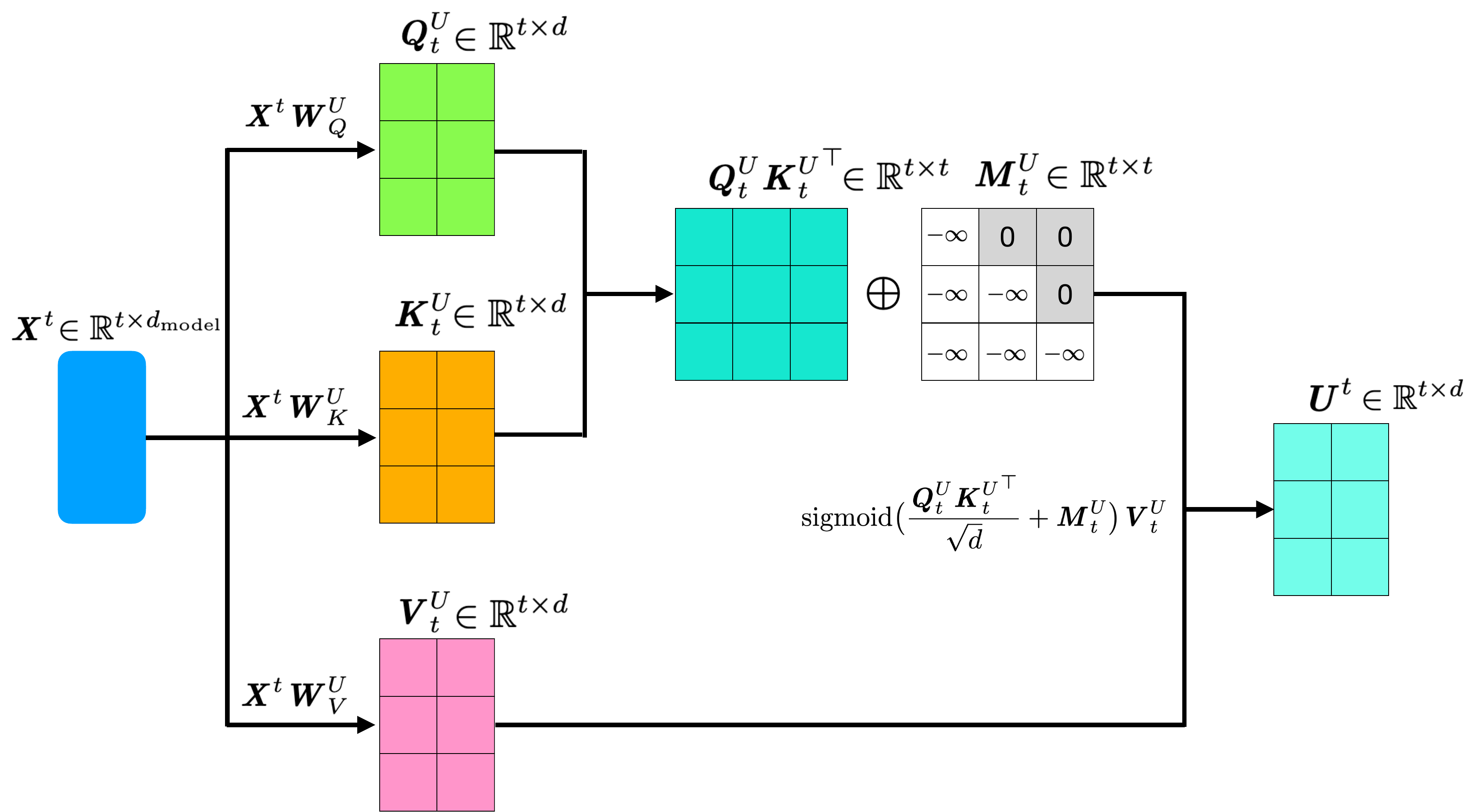}
    \caption{Illustration for the definition of lookahead keys in~\eqref{eq:Ut} when generating the $4$-th token. Let $\dhidden$ and $\dk$ denote the hidden dimension and head dimension, respectively. In this figure, we set $t=3$ and $d=2$.}
    \label{fig:lookaheadkey}
\end{figure}

The definition of $\MM^U_t$ guarantees that the lookahead key of token $s$, $\uu^t_s$, is exposed to information from $\dr{\xx_{s+1}, \cdots, \xx_t}$.   
Since $\uu^t_s$ keeps renewing as the context goes, it is natural that $\uu^t_s \neq \uu^{t+1}_s$.

We have the following remarks regarding the definition of lookahead keys in~\eqref{eq:Ut}.

\begin{itemize}
    \item 
    \textbf{Why are we using sigmoid?} The $\sigmoid$ function is used in~\eqref{eq:Ut} instead of softmax due to the consideration that when generating token $t+1$, for token $s$ with $s < t+1$, synthesizing information contained in tokens $s+1$ to $t$ should not be compulsory. 
    However, since the probabilities in softmax sum up to $1$, which forces $\uu^t_s$ to incorporate information from tokens $s+1$ to $t$ and is not desired. 
    
    \item 
    \textbf{Lookahead keys $\UU^t$ maintains autoregressive property.} First, CASTLE is defined in a recurrent form which is naturally autoregressive. Second, when generating the $(t+1)$-th token, each $\uu^t_s$ is only exposed to information from representations of tokens $s+1$ to $t$ as in~\eqref{eq:Ut}. No information from tokens which are not yet generated is exposed. 
    
    \item 
    \textbf{Lookahead keys $\UU^t$ only occur in CASTLE's recurrent form definition and inference, but cannot be materialized in parallel training.} 
    Since $\uu^t_s$ and $\uu^{t+1}_s$ may vary, 
    this prevents us from materializing $\UU^t$ for each $t$. 
    The computation cost in~\eqref{eq:Ut} is $O(t^2\dk)$. 
    If we materialize all $\UU^t$ in parallel, the computational cost is at least $\sum_{t=1}^{L} t^2 \dk = O(L^3\dk)$ which makes training on large-scale datasets impractical.
    In Section~\ref{sec:training}, we will give an equivalent form which removes the need of materializing each $\UU^t$ and enables efficient parallel training. 

\end{itemize}

\FloatBarrier
\noindent\textbf{CASTLE in Recurrent Form.} 
After defining causal keys and lookahead keys, we are ready to give the formula of CASTLE in recurrent form. 
To generate the $(t+1)$-th token, 
we utilize both the causal keys $\KK^C_t\in \Real^{t\x \dk}$  and the lookahead keys $\UU^t$. 

\begin{figure}[ht!]
    \centering
    \includegraphics[width=0.99\linewidth]{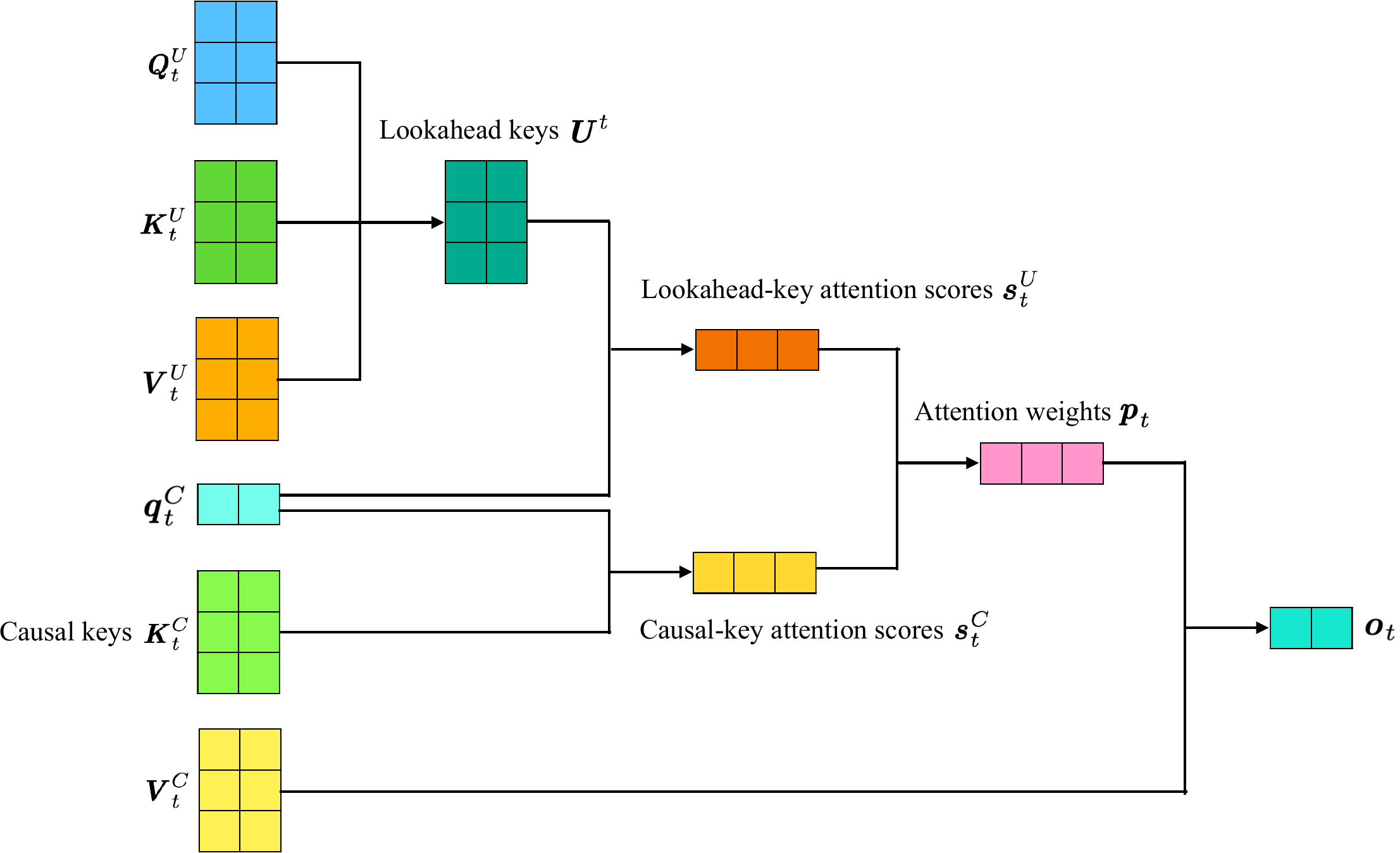}
    \caption{Illustration of CASTLE in recurrent form when generating the 4th token. 
    The causal and lookahead keys are queried by $\qq^C_t$ to generate their respective attention scores, which are combined and then goes through softmax to yield attention weights $\pp_t$. These weights are then multiplied by the value matrix $\VV^C_t$ to compute the output $\oo_t = \atten\pr{\XX^t} \in \Real^{1\x d}$ }
    \label{fig:castlerecurrent}
\end{figure}

More specifically, let the \emph{causal-key attention scores} be
\eql{\label{eq:stC}}{
    \ss^C_t = \frac{\qq^C_{t} {\KK^C_t}\tp}{\sqrt{\dk}} \in \Real^{1\x t}. 
}
Let the \emph{lookahead-key attention scores} be
\eql{\label{eq:stE}}{
    \ss^U_t = {\frac{\qq^C_{t} {\UU^t}\tp}{\sqrt{\dk}} } \in \Real^{1\x t}. 
}
Then, we define attention weights by combining the above attention scores as follows
\eql{\label{eq:ptsoftmax1}}{
    \pp_t = \softmax\pr{\ss^C_t - \silu\pr{\ss^U_t}} \in \Real^{1\x t},  
}
where $\silu(\xx) = \xx \odot \sigmoid(\xx)$.  
Then, the output is calculated as
\eql{\label{eq:headt}}{
    \atten\pr{\XX^t} = \pp_t \VV^C_t \in \Real^{1\x \dk}. 
} 
We remark that $\silu$ is applied in~\eqref{eq:ptsoftmax1} because our ablation study indicates that it plays a crucial role in ensuring training stability.  
We hypothesize that this benefit arises because many past tokens become ‘noise’ as the context grows, and the SiLU transformation effectively acts as a gate, regulating the degree to which past tokens should be down-weighted. 

An illustration of CASTLE in its recurrent form can be found in Figure~\ref{fig:castlerecurrent}.

\FloatBarrier  
\noindent\textbf{CASTLE-SWL in Recurrent Form.}
We propose a variant of CASTLE, termed CASTLE-SWL, where we apply Sliding Windows to Lookahead keys. More specifically, denote sliding window size by $W$. When generating the $(t+1)$-th token, lookahead keys of token $s$ ($s < t+1$) have access to information from $\dr{\xx_k: s+1\leq k \leq \min\dr{s+W, t}}$. Formally, the definition of lookahead keys in CASTLE-SWL follows~\eqref{eq:Ut} with $\MM^U_t$ defined as 
\eql{\label{eq:MUupperwindow}}{
    [\MM^U_t]_{ij} =
    \left\{
    \begin{array}{ll}
        0, & \text{if } i < j \leq \min\dr{t, i + W} \\
        -\infty, & \text{otherwise}
    \end{array}
    \right.  
}
An illustration for $\MM^U_t$ applied to lookahead keys in CASTLE-SWL can be found in Figure~\ref{fig:castlesupperwindow1}.   

\begin{figure}[h!]
    \centering
    \includegraphics[width=0.28\linewidth]{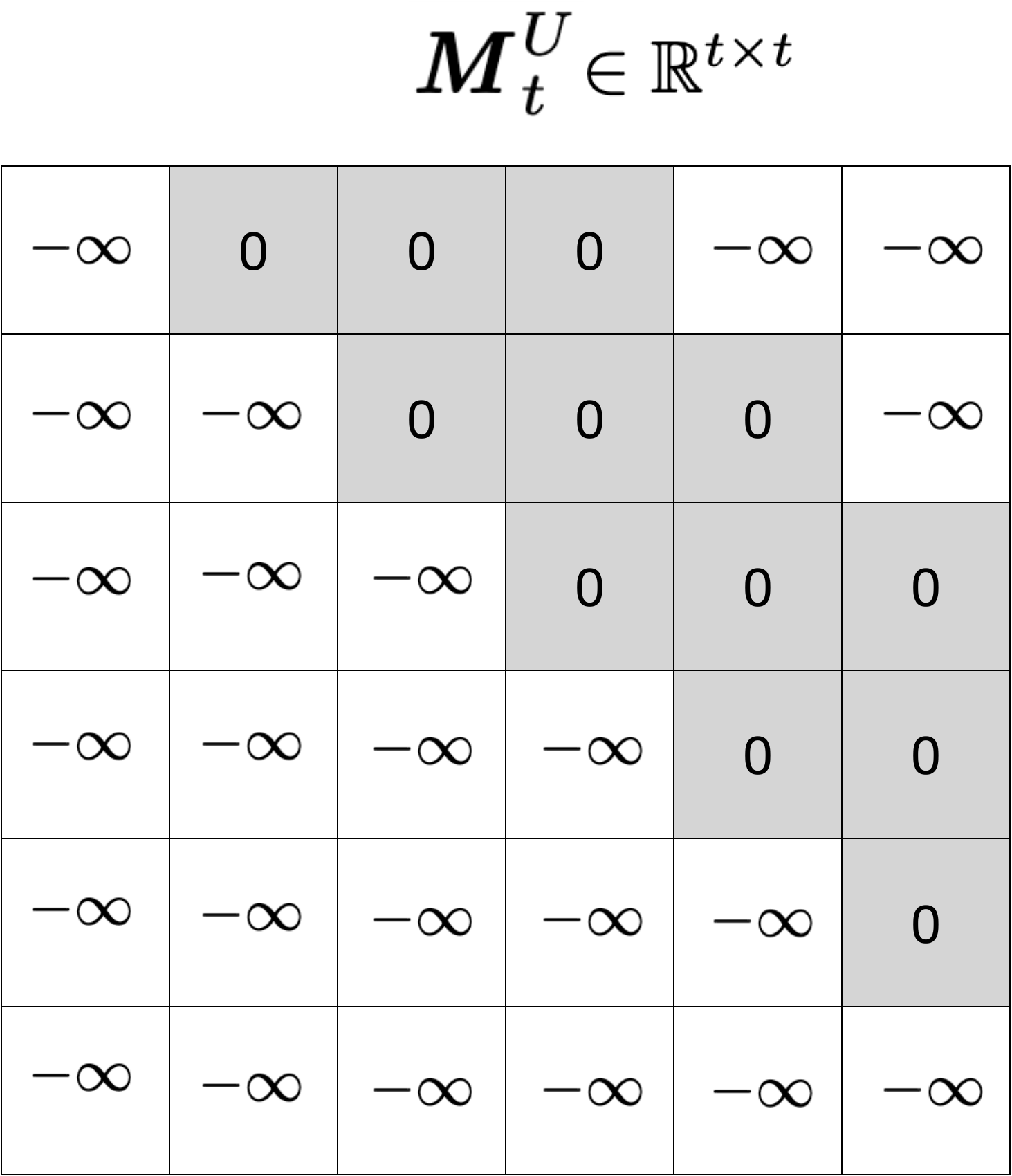}
    \caption{An illustration of $\MM^U_t$ in the lookahead keys of CASTLE-SWL, as defined in~\eqref{eq:MUupperwindow}, when generating the $(t+1)$-th token. Each lookahead key of token $s$ ($s < t$) has access to representations $\dr{\xx_k : s+1 \leq k \leq \min\dr{t,, s+W}}$, i.e., up to $W$ subsequent tokens. Also, no information from tokens which are not yet generated is leaked and thereby preserving the autoregressive property.    
    In this figure, we set $t=6$ and window size $W = 3$.  
    For example, lookahead keys of token $2$ can ``see'' $\dr{\xx_{3}, \xx_4, \xx_5}$, while lookahead keys of token $4$ can ``see'' $\dr{\xx_5, \xx_6}$. As analyzed in Section~\ref{sec:training} and~\ref{sec:inference},  
    CASTLE-SWL has the same complexities with CASTLE in both training and inference, however, it can further reduce FLOPs and improves efficiency in practice through the use of sliding windows.  }
    \label{fig:castlesupperwindow1}
\end{figure}

Sliding window modifies only lookahead keys while causal keys remains unchanged.  
Apart from changing the definition of $\MM^U_t$ from~\eqref{eq:MU1} to~\eqref{eq:MUupperwindow}, all other components of CASTLE remain the same, yielding the definition of CASTLE-SWL. 

The motivation for this design is that the semantic contribution of tokens may decay as the context unfolds. Allowing lookahead keys to aggregate information from distant tokens may introduce noise rather than useful signal. 

In addition, although CASTLE-SWL has the same complexities with CASTLE in both training and inference as analyzed in Section~\ref{sec:training} and~\ref{sec:inference}, it can further reduce FLOPs and improves efficiency in practice through the use of sliding windows.

\FloatBarrier
\subsection{Efficient Parallel Training}\label{sec:training}
In this section, we introduce our efficient parallel training algorithms.   
As discussed in Section~\ref{sec:def}, a straightforward materializing each $\UU^t$ in parallel incurs at least $O(L^3 \dk)$ computational costs. Such complexity makes training on large-scale datasets infeasible. 
To address this, we first derive an equivalent parallel formulation of CASTLE that avoids materializing lookahead keys, and then exploit the special structure of a matrix, which is expressible as a low-rank matrix multiplied by a mask, to reduce CASTLE’s training complexity to $O(L^2d)$. CASTLE-SWL naturally shares the same training complexity.    

\noindent\textbf{CASTLE in Parallel Form.}  
Let $\atten\pr{\XX^t} \in \Real^{1\x \dk}$ denote the output when generating the $(t+1)$-th token as in~\eqref{eq:headt}. 
Then, given the inputs $\XX^L \in \Real^{L\x \dhidden}$, the concatenated outputs are denoted by
\eql{\label{eq:Attention1}}{
    \Atten\prn{\XX^L} = 
    \begin{pmatrix}
        \atten\prn{\XX^1} \\
        \atten\prn{\XX^2} \\
        \vdots \\
        \atten\prn{\XX^L}
    \end{pmatrix} 
    \in \Real^{L\x \dhidden}. 
}

The following theorem provides a unified parallel formulation for both CASTLE and CASTLE-SWL that is equivalent to their recurrent forms introduced in Section~\ref{sec:def}. This formulation then serves as the basis for designing an efficient algorithm.  
Its proof is in Appendix~\ref{sec:prfthmattention1}.
\begin{theorem}\label{thm:attention1}
    Consider inputs $\XX^L\in \Real^{L\x \dhidden}$, where $L$ is the sequence length and $\dhidden$ is the hidden dimension.
    Let $\QQ^U = \XX^L \WW^U_Q$, $\KK^U = \XX^L \WW^U_K$, $\VV^U = \XX^L \WW^U_V$, $\QQ^C = \XX^L \WW^C_Q$, $\KK^C = \XX^L \WW^C_K$, $\VV^C = \XX^L \WW^C_V$.   
    Define matrix $\SSU \in \Real^{L\x L}$ as 
    \eql{\label{eq:R1}}{
        \SSU = \pr{\frac{\QQ^C {\VV^U}\tp}{\sqrt{\dk}} \odot \widetilde{\MM}^{C}} \pr{\sigmoid\pr{\frac{\QQ^U {\KK^U}\tp}{\sqrt{\dk}} + \MM^U}}\tp. 
    }
    Then, the outputs $\Atten\prn{\XX^L}$ as in~\eqref{eq:Attention1} satisfies that 
    \eql{\label{eq:Attentiontrain}}{
        \Atten\prn{\XX^L} = \rsoftmax\pr{\frac{\QQ^C {\KK^C}\tp}{\sqrt{\dk}} + \MM^{C} - \silu\prb{\SSU}} \VV^C.  
    }
    Here, $\MM^{C}$, $\widetilde{\MM}^{C}$ are the causal masks which prevent tokens from attending to their future tokens, i.e., $\MM^{C}_{ij} = 0$ if $i \geq j$ and $\MM^{C}_{ij} = -\infty$ otherwise; $\widetilde{\MM}^{C}_{ij} = 1$ if $i \geq j$ and $\widetilde{\MM}^{C}_{ij} = 0$ otherwise. 
    For CASTLE,
    $\MM^U = \MM^U_L\in \Real^{L\x L}$ with $\MM^U_L$ defined in~\eqref{eq:MU1}, while for CASTLE-SWL, $\MM^U = \MM^U_L \in \Real^{L\x L}$ with $\MM^U_L$ defined in~\eqref{eq:MUupperwindow}.    
\end{theorem}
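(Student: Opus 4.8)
The plan is to prove~\eqref{eq:Attentiontrain} one row at a time: fix $t \in \{1,\dots,L\}$ and show that the $t$-th row of the right-hand side equals $\atten(\XX^t)$ from the recurrent definition~\eqref{eq:headt}. Since $\Atten(\XX^L)$ is by~\eqref{eq:Attention1} the vertical stack of the $\atten(\XX^t)$, and since $\rsoftmax$ followed by right-multiplication by $\VV^C$ acts independently on each row, proving the row-wise identity for all $t$ yields the theorem.

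First I would unfold the recurrent lookahead-key score. Writing $\qq^U_s,\kk^U_j,\vv^U_j$ for the rows of $\QQ^U_t,\KK^U_t,\VV^U_t$, the definition~\eqref{eq:Ut} with the mask~\eqref{eq:MU1} (so that $\sigmoid(-\infty)=0$) gives $\uu^t_s=\sum_{j=s+1}^{t}\sigmoid(\qq^U_s(\kk^U_j)\tp/\sqrt{\dk})\,\vv^U_j$. Substituting into~\eqref{eq:stE} and using linearity of the inner product with $\qq^C_t$ produces the scalar identity
\begin{equation*}
[\ss^U_t]_s=\sum_{j=s+1}^{t}\frac{\qq^C_t(\vv^U_j)\tp}{\sqrt{\dk}}\;\sigmoid\!\pr{\frac{\qq^U_s(\kk^U_j)\tp}{\sqrt{\dk}}}.
\end{equation*}

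The central step is to identify this doubly-masked sum with the $(t,s)$ entry of $\SSU$ in~\eqref{eq:R1}. Writing $\AA=(\QQ^C(\VV^U)\tp/\sqrt{\dk})\odot\widetilde{\MM}^C$ and $\BB=\sigmoid(\QQ^U(\KK^U)\tp/\sqrt{\dk}+\MM^U)$ so that $\SSU=\AA\BB\tp$, the causal mask $\widetilde{\MM}^C$ forces $\AA_{tj}=0$ unless $j\leq t$, while the strictly-upper-triangular mask $\MM^U$ forces $\BB_{sj}=0$ unless $j>s$. Hence $[\SSU]_{ts}=\sum_j\AA_{tj}\BB_{sj}$ collapses to the sum over $s<j\leq t$, matching $[\ss^U_t]_s$ exactly; for $s>t$ the sum is empty and $[\SSU]_{ts}=0$. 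The reason a single static product reproduces every $t$ simultaneously is precisely this decoupling: the per-$t$ truncation of $\VV^U_t$ to its first $t$ rows in the recurrence is reproduced by $\widetilde{\MM}^C$, while the fixed mask $\MM^U=\MM^U_L$ enforces $j>s$. For CASTLE-SWL the argument is verbatim, except that $\MM^U$ is taken from~\eqref{eq:MUupperwindow}, which additionally caps $j\leq s+W$ and merely narrows the surviving index range.

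Finally I would assemble the row. The $(t,s)$ entry of $\QQ^C(\KK^C)\tp/\sqrt{\dk}$ is $[\ss^C_t]_s$ by~\eqref{eq:stC}, and since $\silu$ acts entrywise the $(t,s)$ entry of the softmax argument equals $[\ss^C_t]_s-\silu([\ss^U_t]_s)$ for $s\leq t$ and $-\infty$ for $s>t$ (the latter supplied by $\MM^C$). Applying $\rsoftmax$ to this row therefore yields exactly $\pp_t$ from~\eqref{eq:ptsoftmax1} in the columns $s\leq t$ and zero otherwise, and right-multiplying by $\VV^C$ gives $\sum_{s=1}^{t}[\pp_t]_s\vv^C_s=\pp_t\VV^C_t=\atten(\XX^t)$. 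The main obstacle is the central step—folding the recurrent, $t$-dependent lookahead computation into one $t$-independent matrix product; once one sees that the two masks jointly pin the summation index to $s<j\leq t$, the remainder is routine bookkeeping.
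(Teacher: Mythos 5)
Your proof is correct and follows essentially the same route as the paper's: unfold the recurrence for $\uu^t_s$, observe that the two masks $\widetilde{\MM}^C$ and $\MM^U$ jointly restrict the summation index to $s<j\leq t$ so that $[\SSU]_{ts}=[\ss^U_t]_s$, and then assemble the rows under $\rsoftmax$. The only difference is presentational — you verify the key identity entrywise, whereas the paper zero-pads the row vectors $\ss^U_t$ and invokes a small submatrix lemma (Lemma~\ref{lem:sigmoidQKM}) before stacking them into $\SSU$ — but the underlying argument is identical.
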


\noindent\textbf{Efficient Parallel Training.}  
Theorem~\ref{thm:attention1} establishes the equivalence between the recurrent and parallel formulations for CASTLE and CASTLE-SWL.
However, computing $\Atten(\XX^L)$ directly from Theorem~\ref{thm:attention1} still requires $\Omega(L^3)$ complexity, since \eqref{eq:R1} involves matrix multiplications between $L$-by-$L$ matrices. 

To reduce this cost, notice that in \eqref{eq:R1}, the term $\big(\QQ^C {\VV^U}\tp\big) \odot \widetilde{\MM}^{C}$ is a masked low-rank matrix because the matrix $\QQ^C {\VV^U}\tp$ is of rank $\dk$ which is typically much smaller than $L$. 
This structure enables a more efficient computation of $\SSU$, which we exploit to design a parallel training algorithm as stated in Theorem~\ref{thm:algU1}.  
The proof of Theorem~\ref{thm:algU1} is given in Appendix~\ref{sec:prfthmalgU1}. 

\begin{theorem}\label{thm:algU1}
    Given $\XX^L$'s query, key and value matrices $\QQ^U = \XX^L \WW^U_Q$, $\KK^U = \XX^L \WW^U_K$, $\VV^U = \XX^L \WW^U_V$, $\QQ^C = \XX^L \WW^C_Q$, $\KK^C = \XX^L \WW^C_K$, $\VV^C = \XX^L \WW^C_V$, 
    for both CASTLE and CASTLE-SWL, 
    Algorithm~\ref{alg:train} (forward pass) and Algorithm~\ref{alg:traingrad} (backward pass) enable efficient parallel training and can compute $\Atten\prn{\XX^L}$ and the gradients with computational complexity $O(L^2 \dk)$ and space complexity $O(L \dk)$.  
\end{theorem}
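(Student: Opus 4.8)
The plan is to take the closed form of Theorem~\ref{thm:attention1} as given and show that Algorithm~\ref{alg:train} evaluates it, together with Algorithm~\ref{alg:traingrad} for its adjoint, without ever forming an $L\x L$ matrix. Writing \eqref{eq:R1} entrywise, with $\tilde\vv_j = \vv^U_j/\sqrt{\dk}$ and $b_{kj} = \sigmoid\pr{\qq^U_k(\kk^U_j)\tp/\sqrt{\dk}}$, gives
\[
  [\SSU]_{ik} = \sum_{k<j\le i}\frac{\qq^C_i(\vv^U_j)\tp}{\sqrt{\dk}}\,b_{kj} = \qq^C_i\Big(\sum_{k<j\le i}\tilde\vv_j\,b_{kj}\Big)\tp,
\]
so $\SSU$ is strictly lower triangular and is the product of the causally masked rank-$\dk$ matrix $\pr{\QQ^C(\VV^U)\tp/\sqrt{\dk}}\odot\widetilde{\MM}^C$ with the dense strictly-upper-triangular matrix $\sigmoid(\cdots)$. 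The sole obstruction to $O(L^2\dk)$ time is that a naive triple sum over $(i,k,j)$ costs $\Theta(L^3)$; everything else in \eqref{eq:Attentiontrain} is a standard causal attention. The whole proof therefore reduces to two claims: that a chunked recurrence reproduces the inner sum over $j$ exactly, and that this recurrence interleaves with a FlashAttention-style online softmax so each score block is consumed immediately and no $L\x L$ object is stored.

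For correctness of the forward pass I would partition $\dr{1,\dots,L}$ into chunks of size $C$ and split the inner sum by the chunk of $j$ into an intra-chunk part (those $j$ sharing the chunk of $i$ or of $k$, with the constraints $k<j\le i$ enforced by a small local mask) and a cross-chunk part in which $j$ ranges over entire chunks and the constraints are automatic. The cross-chunk part equals $\qq^C_i(\boldsymbol{h}_k)\tp$ for a state $\boldsymbol{h}_k=\sum_{j>k,\ \mathrm{chunk}(j)<I}\tilde\vv_j\,b_{kj}\in\Real^{1\x\dk}$ that is refreshed after each query chunk $I$ by adding $\sum_{j\in\mathrm{chunk}(I)}\tilde\vv_j\,b_{kj}$. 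A straightforward induction on the query-chunk index then shows the assembled block matches $[\SSU]_{ik}$ above, hence the logit block $\qq^C_i(\kk^C_k)\tp/\sqrt{\dk}+\MM^C_{ik}-\silu([\SSU]_{ik})$ agrees with the argument of $\rsoftmax$ in \eqref{eq:Attentiontrain}; folding each such block into running $(\max,\mathrm{sum})$ statistics and an accumulated output is the standard online-softmax invariant, which I would invoke rather than reprove. CASTLE-SWL only replaces \eqref{eq:MU1} by the banded \eqref{eq:MUupperwindow}, which merely truncates the range of $j$ in the same sum, so the identical argument applies.

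For the backward pass I would differentiate \eqref{eq:Attentiontrain} by matrix calculus. The row-softmax Jacobian yields an upstream gradient on the logits; its causal-attention component is handled exactly as in standard attention backward, while its $-\silu(\SSU)$ component produces $\mathrm{d}\SSU = -(\text{upstream})\odot\silu'(\SSU)$, again a masked $L\x L$ matrix that is never stored. Backpropagating through $\SSU=AB\tp$ with $A=\pr{\QQ^C(\VV^U)\tp/\sqrt{\dk}}\odot\widetilde{\MM}^C$ and $B=\sigmoid(\cdots)$ gives the gradients of $\QQ^C,\VV^U$ (from the masked rank-$\dk$ factor) and of $\QQ^U,\KK^U$ (through the sigmoid), and each of these is once more a masked product of a structured matrix with a dense or low-rank factor, so the same chunked state recurrence applies, now swept in reverse. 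The key bookkeeping point is that $\silu'(\SSU)$ needs $\SSU$ itself, which the $O(L\dk)$ memory budget forbids storing, so the backward pass recomputes the $\SSU$ blocks on the fly by the forward recurrence, exactly as FlashAttention recomputes attention weights.

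Finally, the complexity count: there are $O((L/C)^2)$ chunk pairs, each contributing $O(C^2\dk)$ for its intra-chunk logits and state use; the state updates over all chunks evaluate every relevant $b_{kj}$ once, which is $O(L^2\dk)$ and is the genuine bottleneck since $B$ is dense; summing gives $O(L^2\dk)$ time in both passes. The state $\boldsymbol{h}$, the $\QQ/\KK/\VV$ matrices, the output, and the scalar softmax statistics are all $O(L\dk)$, and no $L\x L$ matrix is materialized, giving $O(L\dk)$ space. I expect the main obstacle to be the backward pass: one must simultaneously express the adjoint of the masked low-rank product $AB\tp$ and of $\silu(\SSU)$ inside the softmax as reverse chunked sweeps, reconcile the along-$j$ accumulation with the across-$k$ softmax normalization, and recompute $\SSU$ blocks on the fly to stay within $O(L\dk)$ memory; getting this interleaving and the associated masks right is the crux of the argument.
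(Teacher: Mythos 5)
Your proposal matches the paper's proof in all essentials: your cross-chunk state $\boldsymbol{h}_k$ is exactly the paper's auxiliary matrix $\DD^{(k)}$ maintained by the invariant~\eqref{eq:Dk1} and updated via~\eqref{eq:Dupdate}, your intra-chunk correction is the masked diagonal-block term in~\eqref{eq:RTkjTj1}--\eqref{eq:Rupdate}, and your backward pass likewise reverses the sweep, recomputes the $\SSU$ blocks on the fly, and undoes the state update (the paper's~\eqref{eq:Dupdate2}). The only cosmetic difference is scheduling: the paper orders the work along anti-diagonals (all $k$-th off-diagonal blocks in parallel, sequential in $k$) so that concurrent blocks never touch the same online-softmax row accumulator, whereas you sweep over query chunks; the recurrence itself and the $O(L^2\dk)$ time and $O(L\dk)$ space counts are identical.
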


\subsection{Efficient Inference with UQ-KV Cache}\label{sec:inference}
In this section, we introduce the inference algorithm which has unified forms for CASTLE and CASTLE-SWL. 
We first introduce the decoding algorithm.  
The decoding algorithm consists of the following 2 steps: \emph{updating step} and \emph{combining step}.  
Fix $t \in \dr{1, 2, \dots, L}$, and consider generating the $(t+1)$-th token.   

\noindent\textbf{Updating step.} 
We generate lookahead keys $\UU^t$ in the updating step. 
First, we compute $\qq^U_t = \xx_t\WW^U_Q$, $\kk^U_t = \xx_t\WW^U_K$ and $\vv^U_t = \xx_t\WW^U_V$.  
Next, rather than computing $\UU^t$ directly from~\eqref{eq:Ut}, which requires $O(t^2 d)$ computation, we update $\UU^{t}$ recursively  
\eql{\label{eq:Uupdate}}{
    \UU^t = 
    \begin{pmatrix}
        \UU^{t-1} + \sigmoid\prB{\frac{\QQ^U_{t-1}{\kk^U_{t}}\tp}{\sqrt{\dk}} + [\MM^U_t]_{:, t}} \vv^U_{t} \\
        \zero^{1\x \dk}
    \end{pmatrix},  
}
where $[\MM^U_t]_{:, t}$ is the $t$-th column of $\MM^U_t$.  
The proof of~\eqref{eq:Uupdate} is given in Appendix~\ref{sec:inferencealg}. 
Next, we discuss the caching strategy and FLOPs of the updating step.
\begin{itemize}

    \item 
    \textbf{Caching in updating step.} First, it is obvious that we need to cache $\UU^t$ so that we can implement the recursive formula~\eqref{eq:Uupdate}. Second, we need to cache $\QQ^U_t$ because $\qq^U_s$ is used in any update from $\UU^{t-1}$ to $\UU^t$ with $s \leq t-1$.  
    As $\kk^U_{t}$ and $\vv^U_{t}$ are only used in the update from $\UU^{t-1}$ to $\UU^t$ and never used again in update from $\UU^{j-1}$ to $\UU^j$ with $j \neq t$, we do not need to cache any other variables except $\UU^t$ and $\QQ^U_t$ for the updating step. 
    
    \item 
    \textbf{FLOPs in updating step.} With cached $\UU^{t-1}$ and $\QQ^U_{t-1}$, the updating formula~\eqref{eq:Uupdate} needs only $O(td)$ FLOPs. 
    And computing $\qq^U_t$, $\kk^U_t$ and $\vv^U_t$ needs $O(\dk\dhidden)$ FLOPs, yielding total FLOPs of $O(\dk\dhidden + td)$.  

\end{itemize}

\noindent\textbf{Combining step.} 
In the combining step, we compute $\qq^C_t = \xx_t\WW^C_Q$, $\kk^C_t = \xx_t\WW^C_K$ and $\vv^C_t = \xx_t\WW^C_V$.  
Next, the attention outputs are then obtained by applying~\eqref{eq:stC},~\eqref{eq:stE},~\eqref{eq:ptsoftmax1} and~\eqref{eq:headt} with $O(td)$ FLOPs. 
At this stage, since we already cached $\UU^t$ in the updating step, only $\KK^C_t$ and $\VV^C_t$ need to be stored.  

\noindent\textbf{UQ-KV cache.}
From the above analysis, the counterpart of the KV cache in CASTLE consists of $\UU^t$, $\QQ^U_t$, $\KK^C_t$, and $\VV^C_t$, which we collectively refer to as the \emph{UQ-KV cache}.
All other variables, including $\qq^C_t$, $\kk^U_t$, and $\vv^U_t$, can be safely disposed of after use.  

For the prefilling stage, let the prompt length be $L$ and inputs $\XX^L \in \Real^{L \times \dhidden}$.    
We first compute $\QQ^U_L = \XX^L\WW^U_Q$, $\KK^U_L = \XX^L\WW^U_K$, $\VV^U_L = \XX^L\WW^U_V$, $\QQ^C_L = \XX^L\WW^C_Q$, $\KK^C_L = \XX^L\WW^C_K$, $\VV^C_L = \XX^L\WW^C_V$.   
Then, we apply the forward pass of the efficient parallel training algorithm (Algorithm~\ref{alg:train}) to get $\Atten(\XX^L)$. 
For the UQ-KV cache, since we already have $\QQ^U_L$, $\KK^C_L$ and $\VV^C_L$, we only need to obtain $\UU^L$. 
This can be done similarly to FlashAttention-2~\citep{dao2023flashattention} due to the similarity between~\eqref{eq:Ut} and standard causal attention. The complete prefilling procedure is given in Algorithm~\ref{alg:prefill}.   
The analysis above leads to the following theorem.  
\begin{theorem}\label{thm:inference}
    Given inputs $\XX^L\in \Real^{L \x \dhidden}$, for both CASTLE and CASTLE-SWL, prefilling has  computational complexity $O(L\dk\dhidden + L^2\dk)$ and space complexity $O(L\dk)$,   
    and during the decoding stage, when generating the $t$-th token, the computational complexity is $O(t \dk + \dk\dhidden)$ and the UQ-KV cache requires $O(t\dk )$ memory.   
\end{theorem}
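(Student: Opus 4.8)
The plan is to treat the prefilling and decoding stages separately, and within each to bound FLOPs and memory independently; the stated complexities then follow by summing the per-stage contributions. Throughout, I would use two elementary facts without further comment: a projection $\XX^L\WW$ with $\XX^L\in\Real^{L\x\dhidden}$ and $\WW\in\Real^{\dhidden\x\dk}$ costs $O(L\dk\dhidden)$ time and $O(L\dk)$ space, and a per-token projection $\xx_t\WW$ costs $O(\dk\dhidden)$ time.

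For prefilling (Algorithm~\ref{alg:prefill}) I would first account for the six projections $\QQ^U,\KK^U,\VV^U,\QQ^C,\KK^C,\VV^C$, each contributing $O(L\dk\dhidden)$ time and $O(L\dk)$ space. Next I would invoke Theorem~\ref{thm:algU1} verbatim: its forward pass yields $\Atten(\XX^L)$ in $O(L^2\dk)$ time and $O(L\dk)$ space. The only remaining task is to materialize $\UU^L$ for the cache. Rather than evaluating~\eqref{eq:Ut} by storing the full $L\x L$ score matrix, I would argue that, because the $\sigmoid$ in~\eqref{eq:Ut} acts entrywise with no cross-row normalization, each row $\uu^L_s=\sum_j\sigmoid(\cdot)\,\vv^U_j$ can be accumulated block-by-block in the style of FlashAttention-2~\citep{dao2023flashattention}, retaining only the running accumulator and one score block. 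This gives $O(L^2\dk)$ time and $O(L\dk)$ space for $\UU^L$. Summing, prefilling costs $O(L\dk\dhidden+L^2\dk)$ time and $O(L\dk)$ space.

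For decoding the $t$-th token I would add the costs of the updating and combining steps. In the updating step, computing $\qq^U_t,\kk^U_t,\vv^U_t$ costs $O(\dk\dhidden)$, and the recursive update~\eqref{eq:Uupdate}—whose correctness I may assume from Appendix~\ref{sec:inferencealg}—costs only $O(t\dk)$, being a single $\sigmoid$-weighted rank-one accumulation against the cached $\QQ^U$ matrix of $O(t)$ rows; this is precisely the step that replaces the naive $O(t^2\dk)$ recomputation of $\UU^t$ from~\eqref{eq:Ut}. In the combining step, computing $\qq^C_t,\kk^C_t,\vv^C_t$ costs $O(\dk\dhidden)$, and evaluating~\eqref{eq:stC},~\eqref{eq:stE},~\eqref{eq:ptsoftmax1},~\eqref{eq:headt} costs $O(t\dk)$, so the per-token total is $O(t\dk+\dk\dhidden)$. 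Finally, the UQ-KV cache consists exactly of $\UU^t,\QQ^U_t,\KK^C_t,\VV^C_t$, each of size $O(t)\x\dk$, so its memory is $O(t\dk)$; the transient quantities $\qq^C_t,\kk^U_t,\vv^U_t$ are discarded after use and do not enter the bound.

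The main obstacle I anticipate is not the arithmetic but justifying the $O(L\dk)$ space bound for prefilling: both the attention output (through Theorem~\ref{thm:attention1}) and $\UU^L$ (through~\eqref{eq:Ut}) are defined via $L\x L$ intermediates, so the space claim hinges on exhibiting streaming/blocked evaluations that never instantiate such a matrix. For the attention output this is delegated to Theorem~\ref{thm:algU1}; for $\UU^L$ the point I would emphasize is that the absence of a softmax denominator makes the entrywise $\sigmoid$ accumulation strictly simpler than online softmax, so the FlashAttention-2 streaming argument transfers directly. Once this is established, the time bounds and the cache-size bound reduce to routine bookkeeping.
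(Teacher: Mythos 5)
Your proposal is correct and follows essentially the same route as the paper: the paper likewise obtains the decoding bounds by summing the projection cost $O(\dk\dhidden)$ with the $O(t\dk)$ cost of the recursive update~\eqref{eq:Uupdate} and of~\eqref{eq:stC}--\eqref{eq:headt}, identifies the cache as $\UU^t,\QQ^U_t,\KK^C_t,\VV^C_t$, and for prefilling combines the six projections with Algorithm~\ref{alg:train} and a blockwise FlashAttention-2--style accumulation of $\UU^L$ (Algorithm~\ref{alg:prefill}). Your added observation that the entrywise $\sigmoid$ needs no online-softmax renormalization, so the streaming accumulation of $\UU^L$ in $O(L\dk)$ space is immediate, is exactly the point the paper leaves implicit when it says the computation is "similar to FlashAttention-2."
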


\subsection{Multi-Head Causal Attention with Lookahead Keys}
As in standard causal attention, we also utilize multi-head mechanism. 
Let $n$ denote the number of heads. 
In each head $i$, when generating the $t$-th token, denote the output as in~\eqref{eq:headt} by $\atten_i(\XX^t) \in \Real^{1\x d}$.   
Then, the output of multi-head causal attention with lookahead keys (multi-head CASTLE) is calculated as
\eql{\label{eq:ot}}{
    \multiheadatten(\XX^t) = \concat\pr{\atten_1(\XX^t), \dots, \atten_n(\XX^t)} \WW^O \in \Real^{1\x \dv},   
}
where $\WW^O \in \Real^{n\dk\x \dhidden}$ is a learnable matrix. 
The formula for forward pass in parallel training and more details of multi-head CASTLE are in Appendix~\ref{sec:prfmultiheadattention}. Multi-head CASTLE-SWL shares the same formulation and parameter count as CASTLE, and is omitted here to avoid redundancy.

\section{Experiments}\label{sec:experiment1}


\subsection{Experimental Setup}\label{sec:mainexperimentsetup}
We use the nanoGPT~\citep{karpathy2023nanogpt} code base. 
Our baseline follows the improved Transformer architecture with SwiGLU~\citep{shazeer2020glu}, Rotary Positional Embeddings~\citep{su2024roformer}, and RMSNorm~\citep{zhang2019root} following LLaMA~\citep{touvron2023llama}.  
We train models on four scales from scratch: small (0.16B), medium (0.35B),  large (0.75B) and XL (1.3B). The medium, large and XL baseline models mirror the configuration of GPT-3~\citep{brown2020language}. 
For the small setting, we increase the number of heads and the hidden dimension relative to the original GPT-3 configuration to better align parameter counts between standard causal attention and CASTLE.
CASTLE-SWL shares identical configurations with CASTLE of each model scale.    
To isolate the effect of the attention mechanism, we replace standard causal attention with CASTLE or CASTLE-SWL while keeping all other components unchanged for a fair comparison. 
We use the AdamW optimizer~\citep{loshchilov2017decoupled} and follow the training recipe of~\citep{dao2024transformers}. 
All models are trained on FineWeb-Edu dataset~\citep{lozhkovfineweb} for 50B tokens. 
Further details of experimental setup can be found in Appendix~\ref{sec:experimentsetup}. 

\subsection{Training \& Validation Loss}\label{sec:valloss}
We report training and validation loss and perplexity in Table~\ref{tab:valloss}. 
Training loss and validation loss curves are shown in Figure~\ref{fig:mhaxl}, Figure~\ref{fig:mhasmall}, Figure~\ref{fig:mhamedium} and Figure~\ref{fig:mhalarge}.  
CASTLE and CASTLE-SWL consistently outperform the baselines in both training and validation loss across all model scales. 

Specifically, after training 50B tokens, CASTLE outperforms baselines across all model scales and achieves validation losses that are 0.0059, 0.0245, 0.0356, and 0.0348 lower than the baseline for the small, medium, large, and XL models, respectively. 

We hypothesize that this improvement stems from lookahead keys in CASTLE.  
By incorporating lookahead keys, CASTLE facilitates better global context capture.
However, smaller models may struggle to fully leverage this mechanism due to limited capacity to model complex global relationships.
As a result, the improvement margin for the small model is less significant compared to larger models. 

CASTLE-SWL matches CASTLE's performance and its validation loss outperforms baselines by 0.0084, 0.0241, 0.0366, 0.0369, respectively.   
The performance gains are particularly significant in the medium, large, and XL models. 

Furthermore, as shown in Table~\ref{tab:model-configs}, both CASTLE and CASTLE-SWL have the same or fewer parameters than their baseline counterparts.  
This further underscores CASTLE and CASTLE-SWL's effectiveness in improving model performance.    

\begin{table}[h!]
\centering
\caption{Training and validation loss and perplexity for models with CASTLE, CASTLE-SWL and standard causal attention. We use ``S'', ``M'', ``L'', ``XL'' to denote model scales. Each model is trained on FineWeb-Edu for 50B tokens. The best and second best results (when showing clear improvements upon baselines) are highlighted in {bold} and {underline}, respectively.  }
\small\begin{tabular}{lccccc}
\toprule
&  &  \multicolumn{2}{c}{{Train}} & \multicolumn{2}{c}{{Eval}} \\
\cmidrule(r){3-4} \cmidrule(r){5-6}
& $n_{\text{params}}$ & {Loss} & {PPL} & {Loss} & {PPL} \\
\midrule
Baseline-S & 160M & 2.795 & 16.364 & 2.798 & 16.411 \\
{CASTLE-S} & 160M & {2.789} & {16.259} & {2.792} & {16.315} \\
{CASTLE-SWL-S} & 160M & {2.786} & {16.213} & {2.790} & {16.273} \\
\midrule 
Baseline-M & 353M & 2.641 & 14.030 & 2.639 & 14.004 \\
{CASTLE-M} & 351M & \bmn{2.616} & \bmn{13.684} & \bmn{2.615} & \bmn{13.665} \\
{CASTLE-SWL-M} & 351M & \underline{2.618} & \underline{13.708} & \underline{2.615} & \underline{13.670} \\
\midrule
Baseline-L & 756M & 2.513 & 12.346 & 2.507 & 12.269 \\
{CASTLE-L} & 753M & \underline{2.476} & \underline{11.897} & \underline{2.472} & \underline{11.840} \\ 
{CASTLE-SWL-L} & 753M & \bmn{2.476} & \bmn{11.890} & \bmn{2.471} & \bmn{11.832} \\ 
\midrule
Baseline-XL & 1.310B & 2.430 & 11.360 & 2.426 & 11.309 \\
{CASTLE-XL} & 1.304B & \bmn{2.401} & \bmn{11.031} & \underline{2.391} & \underline{10.922} \\
{CASTLE-SWL-XL} & 1.304B & \underline{2.401} & \underline{11.036} & \bmn{2.389} & \bmn{10.900} \\
\bottomrule
\end{tabular}
\label{tab:valloss}
\end{table}

\begin{figure}[htbp!]
    \centering
    \begin{subfigure}[t]{0.47\textwidth}
        \centering
        \includegraphics[width=\textwidth]{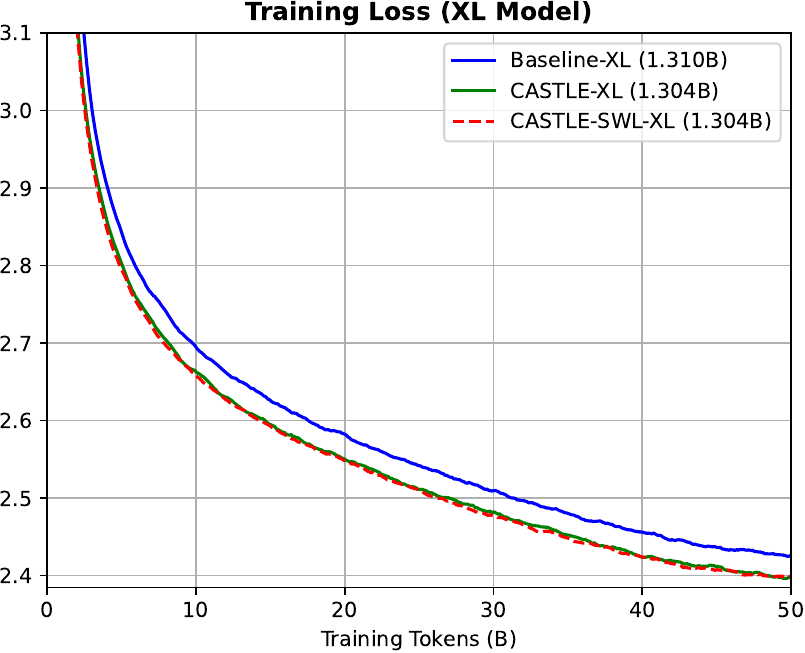}
        \label{fig:mhaxltrainingloss}
    \end{subfigure}%
    \hfill
    \begin{subfigure}[t]{0.47\textwidth}
        \centering
        \includegraphics[width=\textwidth]{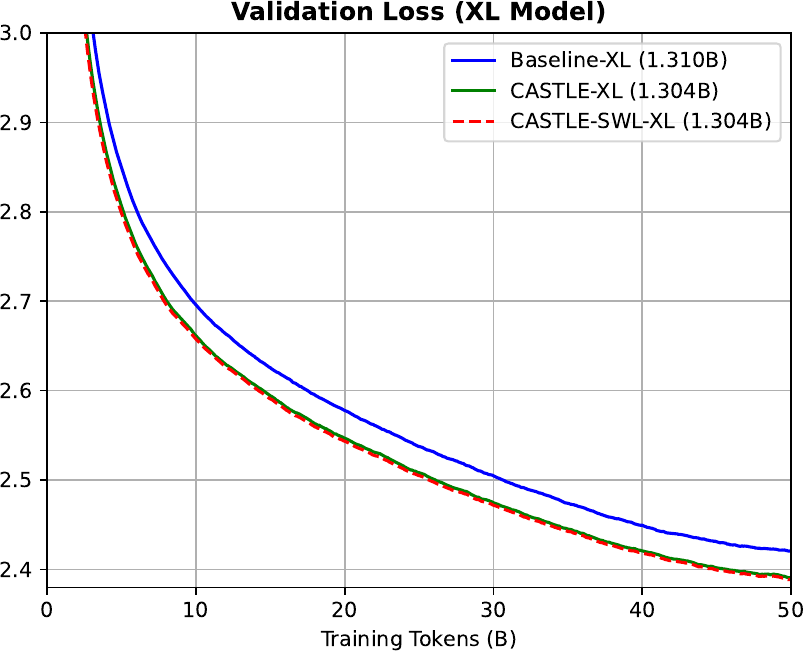}
        \label{fig:mhaxlvalloss}
    \end{subfigure}
    \caption{Training and validation loss curves of XL models. Training loss curve is smoothened with a moving window of 2000 training steps. Validation loss is evaluated every 100 training steps on 40M tokens, and its curve is smoothened by a moving window of 20 evaluation intervals. Loss curves for the small, medium and large models can be found in Figure~\ref{fig:mhasmall}, Figure~\ref{fig:mhamedium} and Figure~\ref{fig:mhalarge} of Appendix~\ref{sec:line1}. After 50B training tokens, CASTLE-XL achieves a {0.0294} lower training loss and a {0.0348} lower validation loss compared to Baseline-XL, while CASTLE-SWL-XL achieves a {0.0290} lower training loss and a {0.0369} lower validation loss compared to Baseline-XL }
    \label{fig:mhaxl}
\end{figure}

\subsection{Downstream Evaluation}\label{sec:evaltask}
We evaluate CASTLE and CASTLE-SWL on  a diverse set of downstream benchmarks, including
ARC~\citep{yadav2019quick}, BoolQ~\citep{clark2019boolq}, HellaSwag~\citep{zellers2019hellaswag}, MMLU~\citep{hendrycks2020measuring},   OBQA~\citep{mihaylov2018can}, PIQA~\citep{bisk2020piqa}, Winograde~\citep{sakaguchi2020winogrande} using lm-evaluation-harness~\citep{eval-harness}. We report normalized accuracy for ARC-Challenge, HellaSwag, OBQA, and PIQA, and standard accuracy for the other benchmarks. 
Results are reported in Table~\ref{tab:eval0shotcastle} (0-shot) and Table~\ref{tab:eval5shotcastle} (5-shot).  
Across all model scales and evaluation settings, both CASTLE and CASTLE-SWL consistently outperform the baselines in average downstream accuracy under both 0-shot and 5-shot settings. 
These findings accompany lower loss and perplexity in Section~\ref{sec:valloss} demonstrate that CASTLE and CASTLE-SWL not only lower perplexity but also translate these gains into stronger performance on diverse downstream tasks. 

\begin{table}[ht!]
\centering
\caption{Evaluation results (0-shot) for downstream tasks of different model scales. Each model is pretrained on FineWeb-Edu for 50B tokens. All values denote accuracy in percentage (\%). The higher accuracy values are shown in bold. Hella.=HellaSwag, Wino.=Winograde. }
\setlength{\tabcolsep}{4.7pt} 
\small\begin{tabular}{l|cccccccc|c}
\toprule
{Model Name} & ARC-C & ARC-E & BoolQ & Hella. & MMLU & OBQA & PIQA & Wino. & Avg. \\
\toprule
Baseline-S                 & \bmn{26.71} & \underline{54.76} & 52.51 & 35.78 & \underline{22.89} & 30.40 & 63.98 & \bmn{52.57} & 42.45 \\
{CASTLE-S}                 & 26.19 & \bmn{56.69} & \bmn{59.85} & \bmn{36.28} & \bmn{23.00} & \underline{31.60} & \underline{64.25} & \underline{52.25} & \bmn{43.76}  \\
{CASTLE-SWL-S}                 & \underline{26.62} & 54.12 & \underline{59.60} & \underline{35.97} & 22.87 & \bmn{32.60} & \bmn{64.31} & 50.51 & \underline{43.33}  \\
\midrule
Baseline-M                 & 28.58 & 60.90 & 53.61 & 43.01 & 23.21 & \underline{33.40} & \underline{67.95} & 50.91 & 45.20 \\
{CASTLE-M}                 & \bmn{30.20} & \underline{61.36} & \bmn{58.01} & \underline{43.24} & \bmn{25.34} & \bmn{34.60} & \underline{67.95} & \underline{52.64} & \bmn{46.67}   \\
{CASTLE-SWL-M}                 & \underline{29.52} & \bmn{61.41} & \underline{57.03} & \bmn{43.76} & \underline{23.29} & 33.00 & \bmn{68.12} & \bmn{53.83} & \underline{46.24}   \\
\midrule
Baseline-L                 & \underline{32.59} & \underline{65.07} & 57.49 & 47.45 & 23.57 & 32.60 & \underline{70.51} & 50.75 & 47.50 \\
{CASTLE-L}                 & 32.34 & \bmn{65.15} & \underline{57.65} & \underline{47.87} & \bmn{24.51} & \underline{35.60} & \bmn{70.78} & \underline{53.51} & \underline{48.43}  \\
{CASTLE-SWL-L}                 & \bmn{32.76} & 63.51 & \bmn{60.95} & \bmn{48.50} & \underline{23.72} & \bmn{36.00} & 70.02 & \bmn{54.85} & \bmn{48.79}  \\
\midrule
Baseline-XL                 & 33.79 & 66.62 & \underline{61.04} & 51.40 & \bmn{26.72} & 36.20 & \bmn{72.58} & 54.06 & 50.30 \\
{CASTLE-XL}                 & \underline{35.32} & \underline{67.51} & \bmn{62.81} & \bmn{52.15} & 23.74 & \underline{37.00} & 70.67 & \bmn{56.59} & \underline{50.72}  \\
{CASTLE-SWL-XL}                 & \bmn{36.43} & \bmn{69.07} & 60.24 & \underline{51.99} & \underline{24.47} & \bmn{37.40} & \underline{71.27} & \underline{55.09} & \bmn{50.74}  \\
\bottomrule
\end{tabular}
\label{tab:eval0shotcastle}
\end{table} 

\begin{table}[h!]
\centering
\caption{Evaluation results (5-shot) for downstream tasks of different model scales. The higher accuracy values are shown in bold. All values denote accuracy in percentage (\%). Each model is pretrained on FineWeb-Edu for 50B tokens. Hella.=HellaSwag, Wino.=Winograde. }
\setlength{\tabcolsep}{4.7pt} 
\small\begin{tabular}{l|cccccccc|c}
\toprule
{Model Name} & ARC-C & ARC-E & BoolQ & Hella. & MMLU & OBQA & PIQA & Wino. & Avg. \\
\toprule
Baseline-S                 & 25.68 & \underline{54.97} & \underline{56.09} & 33.81 & \bmn{25.54} & {28.20} & 63.98 & \bmn{52.57} & 42.60 \\
{CASTLE-S}                 &\underline{26.02} & 54.25 & \bmn{57.13} & \underline{35.24} & 25.22 & \underline{29.80} & \underline{64.53} & 50.99 & \underline{42.90}  \\
{CASTLE-SWL-S}                 &\bmn{27.39} & \bmn{56.19} & \underline{56.09} & \bmn{35.46} & \underline{25.34} & \bmn{30.20} & \bmn{64.85} & \underline{51.22} & \bmn{43.34}  \\
\midrule
Baseline-M                 & 31.06 & 62.46 & 48.47 & 42.83 & \underline{25.22} & \underline{33.00} & 68.39 & 51.78 & 45.40 \\
{CASTLE-M}                 & \underline{32.17} & \bmn{64.06} & \bmn{54.62} & \underline{43.47} & \underline{25.22} & \bmn{33.80} & \bmn{69.48} & \underline{52.49} & \underline{46.91}   \\
{CASTLE-SWL-M}                 & \bmn{32.85} & \underline{63.85} & \underline{54.19} & \bmn{44.18} & \bmn{26.43} & \bmn{33.80} & \underline{69.04} & \bmn{53.43} & \bmn{47.22}   \\
\midrule
Baseline-L                 & 33.36 & 63.64 & \underline{59.24} & 46.16 & \bmn{26.82} & 33.40 & \underline{69.53} & \bmn{54.06} & 48.28 \\
{CASTLE-L}                 & \bmn{37.37} & \bmn{67.89} & 50.95 & \underline{47.71} & \underline{26.11} & \underline{34.20} & \bmn{70.18} & \bmn{54.06} & \underline{48.56}  \\
{CASTLE-SWL-L}                 & \underline{36.26} & \underline{65.53} & \bmn{60.58} & \bmn{48.55} & 24.70 & \bmn{34.80} & 69.10 & \underline{53.67} & \bmn{49.15}  \\
\midrule
Baseline-XL                 & 35.58 & 65.78 & 61.07 & 50.84 & \bmn{26.71} & 36.20 & \underline{71.27} & 52.72 & 50.02 \\
{CASTLE-XL}                 & \bmn{39.08} & \bmn{70.24} & \bmn{62.60} & \underline{51.63} & 24.16 & \bmn{37.40} & 71.00 & \bmn{58.33} & \underline{51.80}  \\
{CASTLE-SWL-XL}                 & \underline{38.99} & \underline{70.08} & \underline{61.74} & \bmn{52.35} & \underline{25.85} & \underline{37.20} & \bmn{72.52} & \underline{56.75} & \bmn{51.93}  \\
\bottomrule
\end{tabular}
\label{tab:eval5shotcastle}
\end{table}

\FloatBarrier
\section{Conclusion}
We introduced CAuSal aTtention with Lookahead kEys (CASTLE), a novel attention mechanism that continually updates keys as the context evolves. This design allows each key representation to incorporate more recent information at every prediction step while strictly preserving the autoregressive property.
Although CASTLE is defined recurrently, we derived a mathematical equivalence that eliminates the need to explicitly materialize lookahead keys at each position, enabling efficient parallel training. Experimental results on language modeling demonstrate that CASTLE consistently outperforms standard causal attention, achieving lower perplexity and stronger downstream performance.


\newpage
\bibliographystyle{plainnat}
\bibliography{main}

\clearpage

\beginappendix

\section{Experimental Details and Additional Results}\label{sec:experimentsetup}

\subsection{Experimental Setup}\label{sec:experimentsetup}
We give details of our experimental setup in this section. 

\subsubsection{Model Architecture}\label{sec:baselinearchitecture}
We use the improved Transformer architecture with SwiGLU~\citep{shazeer2020glu}, Rotary Positional Embeddings~\citep{su2024roformer}, and RMSNorm~\citep{zhang2019root} following LLaMA~\citep{touvron2023llama}. 
More specifically, in each layer $l$, given the contextualized representations $\XX^{(l)} \in \Real^{L\x \dhidden}$ where $L$ is the sequence length and $\dhidden$ is the hidden dimension, then, $\XX^{(l+1)}$ is obtained by
\eq{
    \YY^{(l)} &= \ \text{MultiHead-Attention}(\text{RMSNorm}\prn{\XX^{(l)}}) \\  
    \XX^{(l+1)} &= \ \text{SwiGLU}(\text{RMSNorm}(\YY^{(l)})),   
}
where the $\text{SwiGLU}(\XX) = \prb{\text{Swish}(\XX\WW_1) \odot (\XX\WW_2)}\WW_3$. 
Here, $\WW_1\in \Real^{\dhidden\x \frac{8}{3}\dhidden}$, $\WW_2\in \Real^{\dhidden\x \frac{8}{3}\dhidden}$, $\WW_3\in \Real^{\frac{8}{3}\dhidden \x \dhidden }$
are learnable parameters. 

The function \text{MultiHead-Attention} is instantiated with either the standard multi-head causal attention or the multi-head CASTLE, or the multi-head CASTLE-SWL. 

\subsubsection{Model Configuration and Training Recipe}\label{sec:configuration}
We train models on four scales from scratch: small (0.16B), medium (0.35B),  large (0.75B) and XL (1.3B), where the medium, large and XL baseline models follow the configurations of GPT-3~\citep{brown2020language}. 
For the small setting, we increase the number of heads and the hidden dimension in the original GPT-3 configuration to better align parameter counts between standard causal attention and CASTLE. 
The configurations of the models can be found in Table~\ref{tab:model-configs}. 
To ensure a fair comparison between CASTLE and standard causal attention, we align the number of model parameters by adjusting the number of attention heads and keeping hidden dimensions and head dimensions invariant. 
This avoids changes to the representational upper bound of the models' hidden states and the behavior of RoPE, both of which depend on the hidden dimension.     
As shown in Appendix~\ref{sec:prfmultiheadattention}, the number of parameters in CASTLE matches that of standard causal attention when the number of heads satisfies the relation $n_{\text{CASTLE}} = \frac{4}{7}n_{\text{standard}}$, where $n_{\text{CASTLE}}$ and $n_{\text{standard}}$ denote the number of heads in CASTLE and standard causal attention, respectively.  
For the small model, the baseline uses 14 heads. By setting CASTLE to 8 heads, we align the parameter counts. For the other settings, the baseline models use 16 heads. As $16 * \frac{4}{7} \approx 9.14 $, we choose 9 heads for CASTLE, resulting in a slightly smaller number of parameters than the baseline. 

The model configurations and training recipes of CASTLE-SWL are identical to those of CASTLE at each model scale.   

The sliding window size for lookahead keys in CASTLE-SWL is set to 128 in CASTLE-SWL-S, and 512 in CASTLE-SWL-M, CASTLE-SWL-L, and CASTLE-SWL-XL. 
Appendix~\ref{sec:ablationwindowsize} presents an ablation study on sliding window sizes, showing that the performance of CASTLE-SWL is generally insensitive to the choice of window size within the range $[128, 1024]$.

\begin{table}[ht!]
\centering
\caption{Configurations and learning hyper-parameters (batch size in tokens and learning rate) of the models which we trained. CASTLE-SWL has exactly the same configurations and training hyper-parameters with CASTLE on each model scale, and is omitted from this table for clarity. }
\small \begin{tabular}{lcccccccc}
\toprule
{Model Name} & $n_{\text{params}}$ & $n_{\text{layers}}$ & $\dhidden$ & $n_{\text{heads}}$ & $\dk$ & \text{Batch Size} & \text{Learning Rate} \\
\toprule
Baseline-S                 & 160M   & 12 &  896 (=14 * 64)  & 14 &  64  & \multirow{2}{*}{0.5M} & \multirow{2}{*}{$6.0 \times 10^{-4}$} \\
{CASTLE-S}                 & 160M   & 12 &  896  & 8 &  64  &  &  \\
\midrule
Baseline-M                 & 353M   & 24 &  1024 (=16 * 64)  & 16 &  64  & \multirow{2}{*}{0.5M} & \multirow{2}{*}{$3.0 \times 10^{-4}$} \\
{CASTLE-M}                 & 351M   & 24 &  1024  & 9 &  64  &  &    \\
\midrule
Baseline-L                 & 756M   & 24 &  1536 (=16 * 96)  & 16 &  96  & \multirow{2}{*}{0.5M} & \multirow{2}{*}{$2.5 \times 10^{-4}$} \\
{CASTLE-L}                 & 753M   & 24 &  1536  & 9 &  96  &  &  \\
\midrule
Baseline-XL                 & 1.310B   & 24 &  2048 (=16 * 128)  & 16 &  128  & \multirow{2}{*}{0.5M} & \multirow{2}{*}{$2.0 \times 10^{-4}$} \\
{CASTLE-XL}                 & 1.304B   & 24 &  2048  & 9 &  128  &  &  \\
\bottomrule
\end{tabular}
\label{tab:model-configs}
\end{table} 

We adopt the training hyper-parameters of~\cite{dao2024transformers}.
We use the AdamW optimizer~\citep{loshchilov2017decoupled}
with $\beta_1 = 0.9$, $\beta_2 = 0.95$, weight decay rate coefficient $0.1$, and gradient clipping coefficient $1.0$.  
All models are trained with sequence length 2K and batch size 0.5M tokens.  
The small, medium, large and XL models use peak learning rates of $6.0\times 10^{-4}$, $3.0 \times 10^{-4}$, $2.5 \times 10^{-4}$ and $2.0 \times 10^{-4}$, respectively.  
All models are trained with 2000 warmup steps, and then, the cosine scheduler decays the learning rate to 10\% of the peak learning rate.  
All models are trained on the FineWeb-Edu dataset~\citep{lozhkovfineweb} for 50 billion tokens. 
The efficient parallel training algorithm of CASTLE and CASTLE-SWL (forward pass in Algorithm~\ref{alg:train} and backward pass in Algorithm~\ref{alg:traingrad}) is implemented in Triton~\citep{tillet2019triton}.

\subsection{Ablation Studies}\label{sec:abalation}
We conduct ablation studies to better understand the contributions of different design components in CASTLE and CASTLE-SWL.  
These studies address three key questions: 
\begin{itemize}
    \item Are causal keys necessary, or could lookahead keys alone suffice?
    \item Do the observed improvements arise from the mechanism of lookahead keys, or simply from increasing the number of keys?
    \item What is the role of the SiLU function in~\eqref{eq:ptsoftmax1}? 
    \item How do sliding window sizes of lookahead keys in CASTLE-SWL affect the performance?
\end{itemize}  
We systematically investigate each question in the following sections. 
All ablation experiments are trained on FineWeb-Edu for 25B tokens.  

\subsubsection{Ablations on Causal Keys}
As described in Section~\ref{sec:def}, CASTLE adopts a hybrid design that partitions keys into two groups: causal keys and lookahead keys. If all lookahead keys are replaced with causal keys, CASTLE reduces to standard causal attention. Thus, the performance gains demonstrated in Section~\ref{sec:experiment1} can be attributed to the introduction of lookahead keys. A natural follow-up question is whether causal keys are necessary, or if lookahead keys alone suffice.

To investigate this, we construct a variant of CASTLE in which all causal keys are removed. To ensure a fair comparison, we adjust the configurations so that the total parameter count remains the same, as shown in Table~\ref{tab:model-configs-causalkey}.              

\begin{table}[h!]
\centering
\caption{Configurations of CASTLE and its variant without causal keys used in ablation study on causal keys.  }
\small \begin{tabular}{lcccccc}
\toprule
{CASTLE TYPE} & $n_{\text{params}}$ & $n_{\text{layers}}$ & $\dhidden$ & $n_{\text{heads}}$ & $\dk$    \\
\toprule
{CASTLE}                 & 120M   & 12 &  768  & 6 &  64   \\
{CASTLE w/o causal keys}                 & 120M   & 12 &  768  & 7 &  64 \\    
\bottomrule
\end{tabular}
\label{tab:model-configs-causalkey}
\end{table} 

The above 2 models are both trained on FineWeb-Edu for 25B tokens for efficiency, using the same learning hyper-parameters with the small models in Section~\ref{sec:configuration}. 
Their training and validation loss and perplexity are presented in Table~\ref{tab:vallosscausalkeys}.  

\begin{table}[h!]
\centering
\caption{Training and validation loss and perplexity of CASTLE and its variant without causal keys. Each model is trained on FineWeb-Edu for 25B tokens.}
\small \begin{tabular}{lcccc}
\toprule
& \multicolumn{2}{c}{{Train}} & \multicolumn{2}{c}{{Eval}} \\
\cmidrule(r){2-3} \cmidrule(r){4-5}
& {Loss} & {PPL} & {Loss} & {PPL} \\
\midrule
{CASTLE} & \bmn{2.913} & \bmn{18.417} & \bmn{2.920} & \bmn{18.541} \\
CASTLE w/o causal keys & 3.006 & 20.213 & 3.021 & 20.505 \\  
\bottomrule
\end{tabular}
\label{tab:vallosscausalkeys}
\end{table}
As shown in Table~\ref{tab:vallosscausalkeys}, removing causal keys results in a clear degradation in performance. This demonstrates that causal keys are indispensable in CASTLE.

While these results establish the necessity of both causal and lookahead keys, our current formulation in~\eqref{eq:ptsoftmax1} employs a one-to-one pairing of a causal key and a lookahead key. An alternative design could involve grouping multiple causal keys with a single lookahead key, or vice versa. Exploring the optimal ratio between causal keys and lookahead keys is left for future work.

\subsubsection{Ablations on the Number of Keys}\label{sec:ablationnumkeys}
As discussed in Section~\ref{sec:prfmultiheadattention}, when CASTLE uses half as many heads as standard causal attention, its parameter count becomes $\tfrac{7}{8}$ of the baseline. To maintain comparable parameter counts, we adjust the number of heads accordingly. However, unlike the baseline where each head corresponds to one key, each head in CASTLE introduces two keys—one causal key and one lookahead key.

This design results in CASTLE models having 16, 18, 18, and 18 keys for the small, medium, large, and XL scales, respectively, compared to the corresponding baselines with 14, 16, 16, and 16 keys (Table~\ref{tab:model-configs}). Thus, CASTLE naturally uses slightly more keys than its baseline counterparts. A natural question arises: are the observed improvements due to the introduction of lookahead keys, or simply from having more keys overall?

To disentangle this effect, we construct CASTLE variants with only half as many heads as their baselines, ensuring that the total number of keys ($n_{\text{CausalKeys}}$ + $n_{\text{LookaheadKeys}}$) matches the baselines. This adjustment results in CASTLE having a notably smaller parameter count than the baselines. 

We also did the same ablation studies for CASTLE-SWL.

For efficiency, we train the medium and XL variants on FineWeb-Edu for 25B tokens, using the same hyper-parameters as in Section~\ref{sec:configuration}. Results are reported in Table~\ref{tab:vallosskeyalign} (CASTLE) and Table~\ref{tab:vallosskeyalign2} (CASTLE-SWL). 

As shown in Table~\ref{tab:vallosskeyalign}, despite having clearly fewer parameters, both CASTLE-M-16 and CASTLE-XL-16 outperform their baselines:
CASTLE-M-16 lags behind CASTLE-M by only 0.005 in validation loss, yet surpasses the baseline by 0.026;
CASTLE-XL-16 trails CASTLE-XL by 0.008 in validation loss, while exceeding the baseline by 0.032. 
Similar performance of CASTLE-SWL can be observed in Table~\ref{tab:vallosskeyalign2}.  

These findings confirm that CASTLE and CASTLE-SWL’s advantage stems from its mechanism of incorporating lookahead keys, rather than from increasing the number of keys from $16$ to $18$. This further consolidates the advantage of CASTLE and CASTLE-SWL.

\begin{table}[ht!]
\centering
\caption{Configurations of baseline models, CASTLE, and its variants used in the ablation study on the number of keys. Baseline-M, Baseline-XL, CASTLE-M, and CASTLE-XL follow the same configurations as in Table~\ref{tab:model-configs}. CASTLE-M-16 and CASTLE-XL-16 are constructed by reducing the number of heads in CASTLE-M and CASTLE-XL, respectively, so that the total number of keys ($n_{\text{LookaheadKeys}}$ + $n_{\text{CausalKeys}}$) matches the number of keys of the corresponding baseline models. CASTLE-SWL-M, CASTLE-SWL-XL, CASTLE-SWL-M-16, CASTLE-SWL-XL-16 have identical configurations with CASTLE-M, CASTLE-XL, CASTLE-M-16, CASTLE-XL-16, respectively and are omitted from this table for clarity.  }
\small \begin{tabular}{lcccccc}
\toprule
{Model Name} & $n_{\text{params}}$ & $n_{\text{layers}}$ & $\dhidden$ & $n_{\text{heads}}$ & $n_{\text{LookaheadKeys}}$ + $n_{\text{CausalKeys}}$ & $\dk$ \\
\toprule
Baseline-M                 & 353M   & 24 &  1024 (=16 * 64)  & 16 & 16 &  64   \\
{CASTLE-M}                 & 351M   & 24 &  1024  & 9 & 18 &  64     \\
{CASTLE-M-16}                 & 340M   & 24 &  1024  & 8 & 16 &  64     \\
\midrule
Baseline-XL                 & 1.310B   & 24 &  2048 (=16 * 128)  & 16 & 16 &  128   \\
{CASTLE-XL}                 & 1.304B   & 24 &  2048  & 9 & 18 &  128  \\
{CASTLE-XL-16}                 & 1.260B   & 24 &  2048  & 8 & 16 &  128  \\
\bottomrule
\end{tabular}
\label{tab:model-configs-keyalign}
\end{table} 

\begin{table}[h!]
\centering
\caption{Training and validation loss and perplexity of baseline models, CASTLE and CASTLE variants with the same number of keys as the baselines, after training for 25B tokens on FineWeb-Edu. The lowest loss and perplexity are shown in bold, and the second-lowest values are underlined.}
\small \begin{tabular}{lccccc}
\toprule
&  &  \multicolumn{2}{c}{{Train}} & \multicolumn{2}{c}{{Eval}} \\
\cmidrule(r){3-4} \cmidrule(r){5-6}
& $n_{\text{params}}$ & {Loss} & {PPL} & {Loss} & {PPL} \\
\midrule
Baseline-M & 353M & 2.740 & 15.483 & 2.742 & 15.523 \\
{CASTLE-M} & 351M & \bmn{2.709} & \bmn{15.018} & \bmn{2.711} & \bmn{15.039} \\
CASTLE-M-16 & 340M & \underline{2.714} & \underline{15.093} & \underline{2.716} & \underline{15.126} \\ 
\midrule
Baseline-XL & 1.310B & 2.548 & 12.779 & 2.543 & 12.723 \\
{CASTLE-XL} & 1.304B & \bmn{2.507} & \bmn{12.267} & \bmn{2.503} & \bmn{12.219} \\
CASTLE-XL-16 & 1.260B & \underline{2.514} & \underline{12.349} & \underline{2.511} & \underline{12.316} \\ 
\bottomrule
\end{tabular}
\label{tab:vallosskeyalign}
\end{table}

\begin{table}[h!]
\centering
\caption{Training and validation loss and perplexity of baseline models, CASTLE-SWL and CASTLE-SWL variants with the same number of keys as the baselines, after training for 25B tokens on FineWeb-Edu. The lowest loss and perplexity are shown in bold, and the second-lowest values are underlined.}
\small \begin{tabular}{lccccc}
\toprule
&  &  \multicolumn{2}{c}{{Train}} & \multicolumn{2}{c}{{Eval}} \\
\cmidrule(r){3-4} \cmidrule(r){5-6}
& $n_{\text{params}}$ & {Loss} & {PPL} & {Loss} & {PPL} \\
\midrule
Baseline-M & 353M & 2.740 & 15.483 & 2.742 & 15.523 \\
{CASTLE-SWL-M} & 351M & \bmn{2.710} & \bmn{15.036} & \bmn{2.713} & \bmn{15.068} \\
CASTLE-SWL-M-16 & 340M & \underline{2.716} & \underline{15.117} & \underline{2.718} & \underline{15.150} \\ 
\midrule
Baseline-XL & 1.310B & 2.548 & 12.779 & 2.543 & 12.723 \\
{CASTLE-SWL-XL} & 1.304B & \bmn{2.506} & \bmn{12.255} & \bmn{2.503} & \bmn{12.217} \\
CASTLE-SWL-XL-16 & 1.260B & \underline{2.513} & \underline{12.339} & \underline{2.508} & \underline{12.276} \\ 
\bottomrule
\end{tabular}
\label{tab:vallosskeyalign2}
\end{table}

\FloatBarrier
\subsubsection{Ablations on SiLU function in~\eqref{eq:ptsoftmax1}}
We investigate the role of the SiLU activation introduced in~\eqref{eq:ptsoftmax1}. 
When training CASTLE-XL and CASTLE-SWL-XL without $\silu$, we consistently observed the loss blows up and shows \texttt{NaN} then.
In particular, CASTLE-SWL-XL blows up around step 24300, whereas CASTLE-XL blows up earlier, around step 1500.  
Lowering the learning rate mitigates this instability. For example, CASTLE-SWL-XL without $\silu$ can remain stable up to at least 25B tokens when trained with peak learning rates of $1\times 10^{-4}$ or $5\times 10^{-5}$. 
However, such reductions in learning rate lead to noticeably worse performance, as shown in Table~\ref{tab:silu}.   

\begin{table}[h!]
\centering
\caption{Ablation study of the SiLU function in~\eqref{eq:ptsoftmax1}. 
Removing SiLU causes training instability. 
More specifically,
training CASTLE-SWL-XL without SiLU in~\eqref{eq:ptsoftmax1} under the same learning rate ($2\times 10^{-4}$) as in Table~\ref{tab:model-configs} will have the loss curve blow up at around 24300 training step.
Reducing the learning rate alleviates this instability issue but results in significant performance degradation.  }
\small \begin{tabular}{lccccc}
\toprule
&  &  \multicolumn{2}{c}{{Train}} & \multicolumn{2}{c}{{Eval}} \\
\cmidrule(r){3-4} \cmidrule(r){5-6}
& Learning Rate & {Loss} & {PPL} & {Loss} & {PPL} \\
\midrule
{CASTLE-SWL-XL} & $2\times 10^{-4}$ & {2.506} & {12.255} & {2.503} & {12.217} \\
{CASTLE-SWL-XL w/o SiLU} & $1\times 10^{-4}$ & {2.523} & {12.468} & {2.520} & {12.424}  \\
{CASTLE-SWL-XL w/o SiLU} & $5\times 10^{-5}$ & {2.571} & {13.084} & {2.571} & {13.075} \\
\bottomrule
\end{tabular}
\label{tab:silu}
\end{table}

\FloatBarrier
\subsubsection{Ablations on Sliding Window Size in Lookahead Keys of CASTLE-SWL}\label{sec:ablationwindowsize}
We investigate the effect of sliding window size on CASTLE-SWL across different model scales. 
For efficiency, each model is trained on the FineWeb-Edu dataset for 25B tokens. 
We evaluate CASTLE-SWL with window sizes of 128, 256, 512, and 1024, and report both training and validation loss and perplexity. 
The results for small, medium, large, and XL models are shown in Table~\ref{tab:windowsizeS}, Table~\ref{tab:windowsizeM}, Table~\ref{tab:windowsizeL}, and Table~\ref{tab:windowsizeXL}, respectively. 
Throughout, we use the number following ``SWL'' to denote the sliding window size, e.g., CASTLE-SWL128-S refers to CASTLE-SWL-S with a window size of 128. 

As shown in Table~\ref{tab:windowsizeS}, a window size of 128 achieves the best performance for the small model. 
For medium, large, and XL models (Tables~\ref{tab:windowsizeM}, \ref{tab:windowsizeL}, and \ref{tab:windowsizeXL}), the optimal performance is obtained with a window size of 512. 
Based on these findings, we adopt window sizes of 128, 512, 512, and 512 for small, medium, large, and XL models, respectively, in the experiments presented in Section~\ref{sec:experiment1}.

Overall, CASTLE-SWL performance is not sensitive to the choice of sliding window size. 
For example, for the medium model, the best sliding window size (512) reduces validation loss by $0.0297$ compared to the baseline, but the gap between the best (512) and worst window sizes (256) is only $0.0038$. 
Similarly, for the XL model, the best sliding window (512) improves upon the baseline by $0.0406$ while the difference between the best (512) and worst (128) sliding window sizes is just $0.0099$.   
These results suggest that while CASTLE-SWL consistently improves over the baselines across all model scales and sliding window sizes tested, and its performance is relatively robust to the choices of window sizes.

\begin{table}[h!]
\centering
\caption{Ablations on sliding window sizes for CASTLE-SWL-S. Training and validation loss and perplexity of baseline models, CASTLE-SWL-S with different sliding window sizes after training for 25B tokens on FineWeb-Edu are reported. The lowest loss and perplexity are shown in bold, and the second-lowest values are underlined.}
\small \begin{tabular}{lccccc}
\toprule
&  &  \multicolumn{2}{c}{{Train}} & \multicolumn{2}{c}{{Eval}} \\
\cmidrule(r){3-4} \cmidrule(r){5-6}
& $n_{\text{params}}$ & {Loss} & {PPL} & {Loss} & {PPL} \\
\midrule
Baseline-S & 160M & 2.892 & 18.037 & 2.901 & 18.197 \\
CASTLE-SWL128-S & 160M & \bmn{2.883} & \bmn{17.859} & \bmn{2.889} & \bmn{17.971} \\ 
CASTLE-SWL256-S & 160M & 2.888 & 17.952 & 2.895 & 18.092 \\ 
CASTLE-SWL512-S & 160M & 2.885 & 17.910 & \underline{2.892} & \underline{18.023} \\ 
CASTLE-SWL1024-S & 160M & \underline{2.885} & \underline{17.901} & 2.892 & 18.031 \\ 
\bottomrule
\end{tabular}
\label{tab:windowsizeS}
\end{table}

\begin{table}[h!]
\centering
\caption{Ablations on sliding window sizes for CASTLE-SWL-M. Training and validation loss and perplexity of baseline models, CASTLE-SWL-M with different sliding window sizes after training for 25B tokens on FineWeb-Edu are reported. The lowest loss and perplexity are shown in {bold}, and the second-lowest values are {underlined}.}
\small \begin{tabular}{lccccc}
\toprule
&  &  \multicolumn{2}{c}{{Train}} & \multicolumn{2}{c}{{Eval}} \\
\cmidrule(r){3-4} \cmidrule(r){5-6}
& $n_{\text{params}}$ & {Loss} & {PPL} & {Loss} & {PPL} \\
\midrule
Baseline-M & 353M & 2.740 & 15.483 & 2.742 & 15.523 \\
CASTLE-SWL128-M & 351M & 2.715 & 15.098 & 2.716 & 15.124 \\ 
CASTLE-SWL256-M & 351M & 2.715 & 15.112 & 2.716 & 15.127 \\ 
CASTLE-SWL512-M & 351M & \bmn{2.710} & \bmn{15.036} & \bmn{2.713} & \bmn{15.068} \\ 
CASTLE-SWL1024-M & 351M & \underline{2.713} & \underline{15.071} & \underline{2.715} & \underline{15.103} \\ 
\bottomrule
\end{tabular}
\label{tab:windowsizeM}
\end{table}

\begin{table}[h!]
\centering
\caption{Ablations on sliding window sizes for CASTLE-SWL-L. Training and validation loss and perplexity of baseline models, CASTLE-SWL-L with different sliding window sizes after training for 25B tokens on FineWeb-Edu are reported. The lowest loss and perplexity are shown in bold, and the second-lowest values are underlined.}
\small \begin{tabular}{lccccc}
\toprule
&  &  \multicolumn{2}{c}{{Train}} & \multicolumn{2}{c}{{Eval}} \\
\cmidrule(r){3-4} \cmidrule(r){5-6}
& $n_{\text{params}}$ & {Loss} & {PPL} & {Loss} & {PPL} \\
\midrule
Baseline-L & 756M & 2.740 & 15.483 & 2.742 & 15.523 \\
CASTLE-SWL128-L & 753M & 2.597 & 13.425 & 2.596 & 13.411 \\ 
CASTLE-SWL256-L & 753M & 2.589 & 13.314 & 2.587 & 13.290 \\ 
CASTLE-SWL512-L & 753M & \bmn{2.582} & \bmn{13.219} & \bmn{2.580} & \bmn{13.202} \\ 
CASTLE-SWL1024-L & 753M & \underline{2.582} & \underline{13.229} & \underline{2.581} & \underline{13.209} \\ 
\bottomrule
\end{tabular}
\label{tab:windowsizeL}
\end{table}

\begin{table}[h!]
\centering
\caption{Ablations on sliding window sizes for CASTLE-SWL-XL. Training and validation loss and perplexity of baseline models, CASTLE-SWL-XL with different sliding window sizes after training for 25B tokens on FineWeb-Edu are reported. The lowest loss and perplexity are shown in bold, and the second-lowest values are underlined.}
\small \begin{tabular}{lccccc}
\toprule
&  &  \multicolumn{2}{c}{{Train}} & \multicolumn{2}{c}{{Eval}} \\
\cmidrule(r){3-4} \cmidrule(r){5-6}
& $n_{\text{params}}$ & {Loss} & {PPL} & {Loss} & {PPL} \\
\midrule
Baseline-XL & 1.310B & 2.548 & 12.779 & 2.543 & 12.723 \\
CASTLE-SWL128-XL & 1.304B & 2.517 & 12.393 & 2.513 & 12.339 \\ 
CASTLE-SWL256-XL & 1.304B & 2.514 & 12.353 & 2.510 & 12.300 \\ 
CASTLE-SWL512-XL & 1.304B & \bmn{2.506} & \bmn{12.255} & \bmn{2.503} & \bmn{12.217} \\ 
CASTLE-SWL1024-XL & 1.304B & \underline{2.514} & \underline{12.351} & \underline{2.509} & \underline{12.294} \\ 
\bottomrule
\end{tabular}
\label{tab:windowsizeXL}
\end{table}

\FloatBarrier
\subsection{Additional Loss Curves}\label{sec:line1}
This section presents additional training and validation loss curves for the small, medium, and large models, while the loss curves for the XL models are already shown in Figure~\ref{fig:mhaxl}. Each figure compares the baseline with CASTLE and CASTLE-SWL.    
\begin{figure}[ht!]
    \centering
    \begin{subfigure}[t]{0.47\textwidth}
        \centering
        \includegraphics[width=\textwidth]{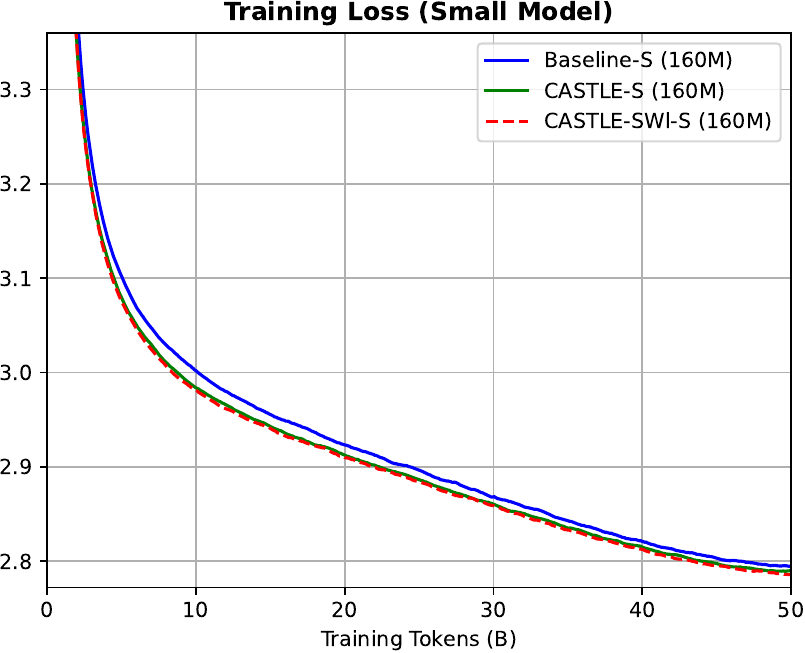}
        \label{fig:mhasmalltrainingloss}
    \end{subfigure}  
    \hfill
    \begin{subfigure}[t]{0.47\textwidth}
        \centering
        \includegraphics[width=\textwidth]{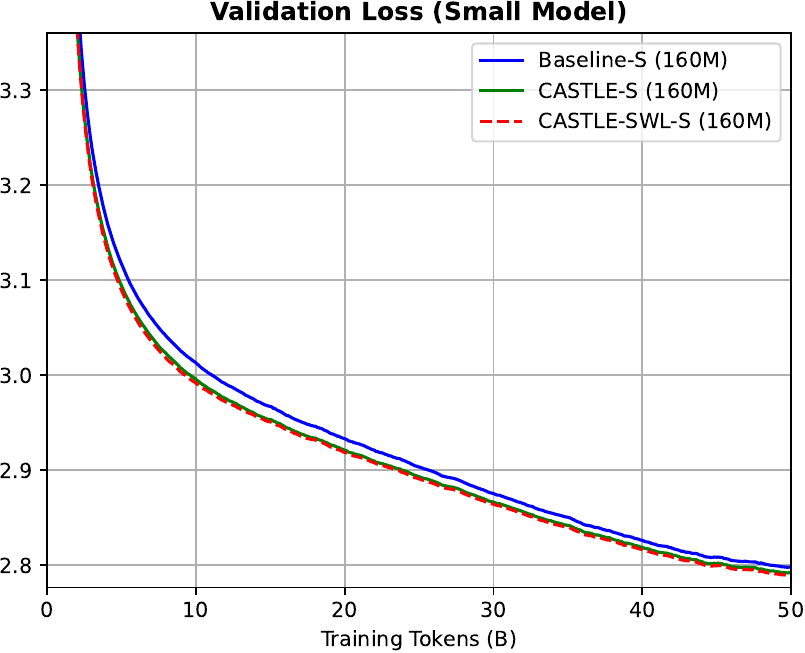}
        \label{fig:mhasmallvalloss}
    \end{subfigure}
    \caption{Training and validation loss curves of small models. Training loss curve is smoothened with a moving window of 2000 training steps. Validation loss is evaluated every 100 training steps on 40M tokens, and its curve is smoothened by a moving window of 20 evaluation intervals. As seen in Table~\ref{tab:valloss} and in comparison with Figure~\ref{fig:mhamedium}, Figure~\ref{fig:mhalarge} and Figure~\ref{fig:mhaxl}, CASTLE yields only marginal improvements over the baseline on small models. A likely explanation is that this is because the benefit of lookahead keys may lie in helping models capture global dependencies, but small models are capacity-limited and can primarily extract local features, making global relations less useful at this scale.  }
    \label{fig:mhasmall}
\end{figure}

\begin{figure}[ht!]
    \centering
    \begin{subfigure}[t]{0.47\textwidth}
        \centering
        \includegraphics[width=\textwidth]{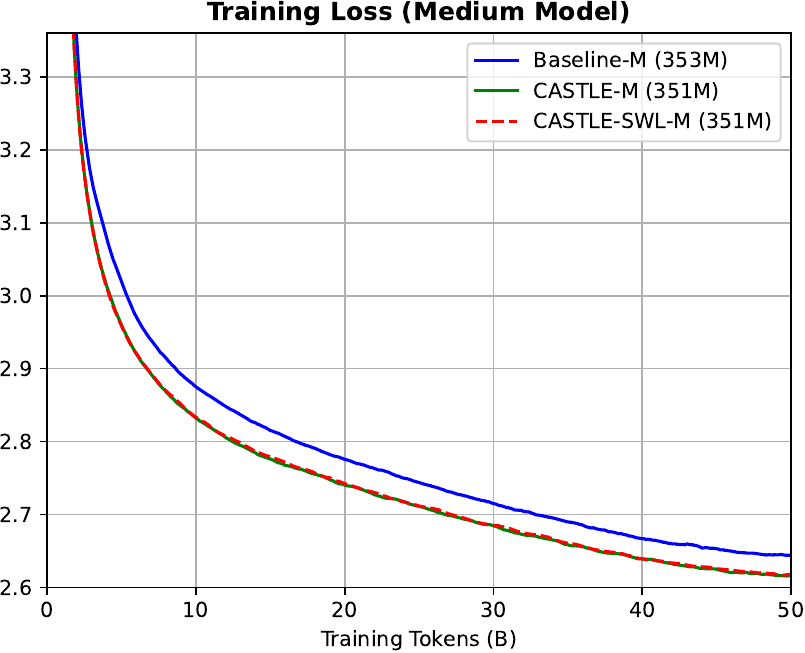}
        \label{fig:mhamediumtrainingloss}
    \end{subfigure}
    \hfill
    \begin{subfigure}[t]{0.47\textwidth}
        \centering
        \includegraphics[width=\textwidth]{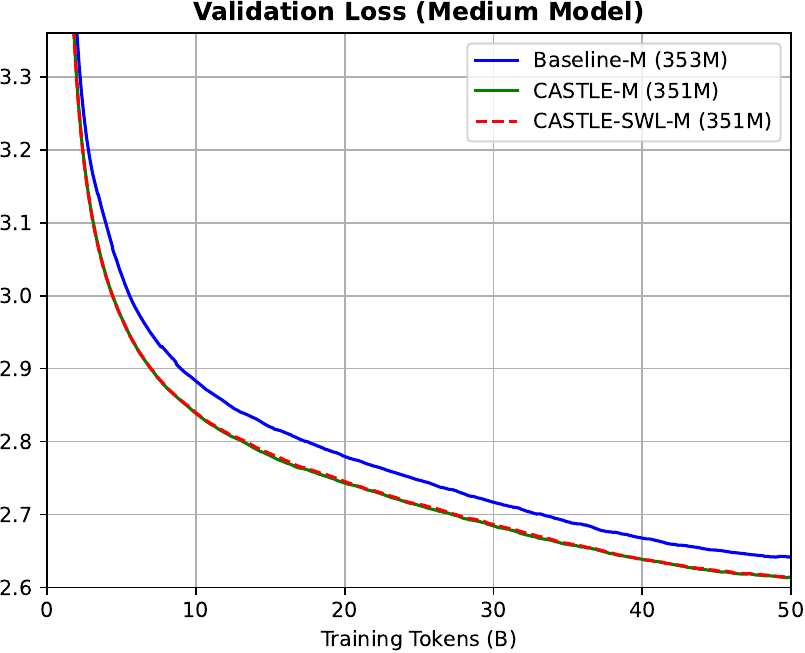}
        \label{fig:mhamediumvalloss}
    \end{subfigure}
    \caption{Training and validation loss curves of medium models. Training loss curve is smoothened with a moving window of 2000 training steps. Validation loss is evaluated every 100 training steps on 40M tokens, and its curve is smoothened by a moving window of 20 evaluation intervals. After 50B training tokens, CASTLE-M achieves a {0.0294} lower training loss and a {0.0245} lower validation loss compared to Baseline-M, while CASTLE-SWL-M achieves a {0.0232} lower training loss and a {0.0241} lower validation loss compared to Baseline-M}
    \label{fig:mhamedium}
\end{figure}

\begin{figure}[!h]
    \centering
    \begin{subfigure}[t]{0.47\textwidth}
        \centering
        \includegraphics[width=\textwidth]{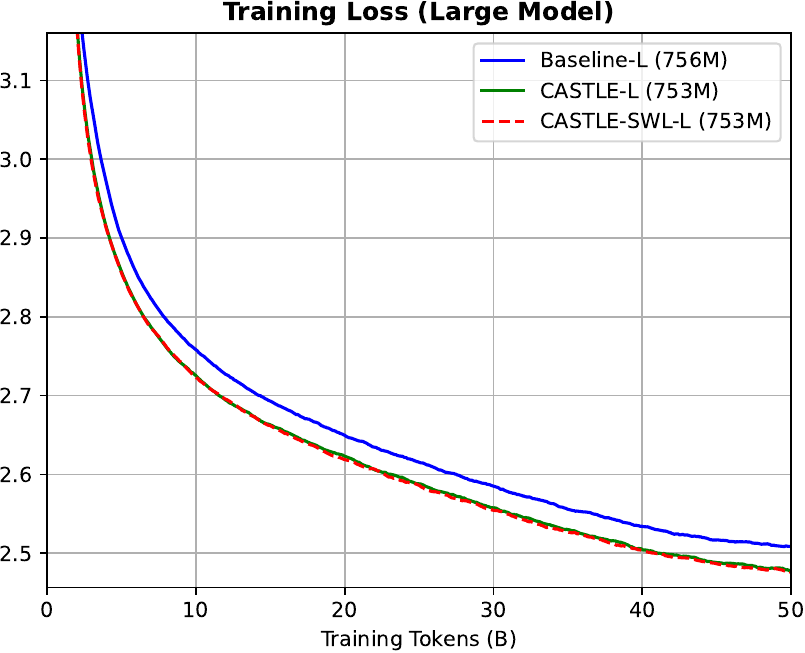}
        \label{fig:mhalargetrainingloss}
    \end{subfigure}
    \hfill
    \begin{subfigure}[t]{0.47\textwidth}
        \centering
        \includegraphics[width=\textwidth]{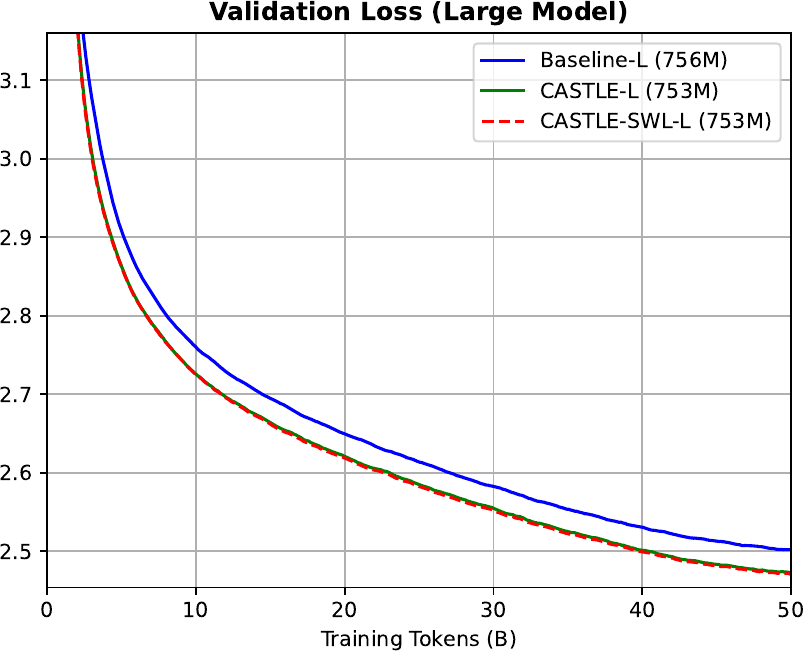}
        \label{fig:mhalargevalloss}
    \end{subfigure}
    \caption{Training and validation loss curves of large models. Training loss curve is smoothened with a moving window of 2000 training steps. Validation loss is evaluated every 100 training steps on 40M tokens, and its curve is smoothened by a moving window of 20 evaluation intervals. After 50B training tokens, CASTLE-L achieves a {0.0371} lower training loss and a {0.0356} lower validation loss compared to Baseline-L, while CASTLE-SWL-L achieves a {0.0376} lower training loss and a {0.0366} lower validation loss compared to Baseline-L }
    \label{fig:mhalarge}
\end{figure}

\FloatBarrier
\section{Proof of Theorem~\ref{thm:attention1}}\label{sec:prfthmattention1}  
First, recall the notations in Section~\ref{sec:training}. 
More specifically,  
consider inputs 
$\XX^L = 
\begin{pmatrix}
    \xx_1 \\
    \xx_2 \\
    \vdots \\
    \xx_{L}
\end{pmatrix} \in \Real^{L\x \dhidden}$, where $\xx_t$ is the representation of the $t$-th input token, $L$ is the sequence length and $\dhidden$ is the hidden dimension.  

For each $1\leq t\leq L$, denote
\eq{
    \qq^U_t &= \xx_t\WW^U_Q, \quad
    \kk^U_t = \xx_t\WW^U_K, \quad 
    \vv^U_t = \xx_t\WW^U_V, \\
    \qq^C_t &= \xx_t\WW^C_Q, \quad 
    \kk^C_t = \xx_t\WW^C_K, \quad   
    \vv^C_t = \xx_t\WW^C_V. 
}
 
\eq{
    \QQ^U_t &= \begin{pmatrix}
    \qq^U_1 \\
    \qq^U_2 \\
    \vdots \\
    \qq^U_{t}
\end{pmatrix} = \XX^t\WW^U_Q, \quad   
    \KK^U_t = \begin{pmatrix}
    \kk^U_1 \\
    \kk^U_2 \\
    \vdots \\
    \kk^U_{t}
\end{pmatrix} = \XX^t\WW^U_K, \quad   
    \VV^U_t = \begin{pmatrix}
    \vv^U_1 \\
    \vv^U_2 \\
    \vdots \\
    \vv^U_{t}
\end{pmatrix} = \XX^t\WW^U_V, \\  
    \QQ^C_t &= \begin{pmatrix}
    \qq^C_1 \\
    \qq^C_2 \\
    \vdots \\
    \qq^C_{t}
\end{pmatrix} = \XX^t\WW^C_Q, \quad   
    \KK^C_t = \begin{pmatrix}
    \kk^C_1 \\
    \kk^C_2 \\
    \vdots \\
    \kk^C_{t}
\end{pmatrix} = \XX^t\WW^C_K, \quad   
    \VV^C_t = \begin{pmatrix}
    \vv^C_1 \\
    \vv^C_2 \\
    \vdots \\
    \vv^C_{t}
\end{pmatrix} = \XX^t\WW^C_V.  
}
And $\MM_t^C$, $\widetilde{\MM}^C_t$ and $\MM_t^U$ are $t$-by-$t$ mask matrices. 
$\MM^{C}_t$ is the $t$-by-$t$  causal mask which prevents tokens from attending to their future tokens, i.e., $[\MM^{C}_t]_{ij} = 0$ if $i \geq j$ and $[\MM^{C}_t]_{ij} = -\infty$ otherwise;   $[\widetilde{\MM}^{C}_t]_{ij} = 1$ if $i \geq j$ and $[\widetilde{\MM}^{C}_t]_{ij} = 0$ otherwise.
For CASTLE, $\MM^U_t$ is defined in~\eqref{eq:MU1} and for CASTLE-SWL, $\MM^U_t$ is defined in~\eqref{eq:MUupperwindow}.

For the projection matrices of the entire sequence $\XX^L$, we drop $L$ as in Theorem~\ref{thm:attention1} for simplicity, i,e,
\eq{
    \QQ^U &= \QQ^U_L = \XX^L \WW^U_Q,  \quad
    \KK^U = \KK^U_L = \XX^L \WW^U_K,  \quad
    \VV^U = \VV^U_L = \XX^L \WW^U_V, \\
    \QQ^C &= \QQ^C_L = \XX^L \WW^C_Q,  \quad
    \KK^C = \KK^C_L = \XX^L \WW^C_K,  \quad
    \VV^C = \VV^C_L = \XX^L \WW^C_V.                                                       
}
And $\MM^U = \MM^U_L$, $\MM^C = \MM^C_L$. 
Then, $\MM^U_t$ is a $t$-by-$t$ submatrix of $\MM^U = \MM^U_L$. Similarly, $\MM^C_t$ is also a submatrix of $\MM^C$. 

Consider when we are generating the $(t+1)$-th token. As in~\eqref{eq:Ut}
\eq{
    \UU^t = 
    \begin{pmatrix}
      \uu^t_1 \\
      \uu^t_2 \\
      \vdots \\
      \uu^t_{t}
    \end{pmatrix}
    = \sigmoid\prb{\frac{\QQ^U_t {\KK^U_t}\tp}{\sqrt{\dk}} + \MM^U_t} \VV^U_t \in \Real^{t\x \dv}
}

Then, the lookahead-key attention scores as in~\eqref{eq:stE} are 
\eql{\label{eq:stequ1}}{
    \ss^U_t &= \frac{\qq^C_{t} {\UU^t}\tp}{\sqrt{\dk}} 
    = \frac{\qq^C_{t} {\VV^U_t}\tp \pr{\sigmoid\prb{\frac{\QQ^U_t {\KK^U_t}\tp}{\sqrt{\dk}} + \MM^U_t}}\tp}{\sqrt{\dk}}.
}

We will need the following lemma to proceed.  
\begin{lemma}\label{lem:sigmoidQKM}
    For any vector $\aa\in \Real^{1\x t}$, let $\widetilde{\aa} = (\aa, \zero^{1\x (L-t)})\in \Real^{1\x L}$, where $\zero^{1\x (L - t)}$ is the all-zeros vector of size $(1, L - t)$. 
    Then, 
    \eq{
        \pr{\aa \pr{\sigmoid\prbb{\frac{\QQ^U_t {\KK^U_t}\tp}{\sqrt{\dk}} + \MM^U_t}}\tp, \zero^{1\x (L - t)}} 
        = \widetilde{\aa} \pr{\sigmoid\prbb{\frac{\QQ^U {\KK^U}\tp}{\sqrt{\dk}} + \MM^U}}\tp.   
    }
\end{lemma}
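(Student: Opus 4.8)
The plan is to reduce the identity to a block-matrix computation built on two observations: the matrix inside the sigmoid on the left-hand side is exactly the top-left $t\x t$ block of the matrix inside the sigmoid on the right-hand side, and the strictly upper-triangular support of $\MM^U$ forces the corresponding lower-left block of the right-hand sigmoid matrix to vanish.

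First I would record the submatrix relationships. Since $\QQ^U_t$ and $\KK^U_t$ consist of the first $t$ rows of $\QQ^U=\QQ^U_L$ and $\KK^U=\KK^U_L$, the product $\frac{\QQ^U_t {\KK^U_t}\tp}{\sqrt{\dk}}$ is the top-left $t\x t$ block of $\frac{\QQ^U {\KK^U}\tp}{\sqrt{\dk}}$. Combined with the already-noted fact that $\MM^U_t$ is the top-left $t\x t$ block of $\MM^U$ (which holds for both mask definitions~\eqref{eq:MU1} and~\eqref{eq:MUupperwindow}), the $t\x t$ matrix $G_t := \frac{\QQ^U_t {\KK^U_t}\tp}{\sqrt{\dk}} + \MM^U_t$ is precisely the top-left block of the $L\x L$ matrix $G := \frac{\QQ^U {\KK^U}\tp}{\sqrt{\dk}} + \MM^U$. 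Because $\sigmoid$ acts entrywise, setting $\BB := \sigmoid(G)\in\Real^{L\x L}$, the matrix $\sigmoid(G_t)$ is exactly the top-left $t\x t$ block of $\BB$, i.e. $\sigmoid(G_t)_{jk}=\BB_{jk}$ for all $j,k\leq t$.

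Next I would expand both sides coordinatewise. Since $\widetilde{\aa}$ vanishes beyond index $t$, the $j$-th entry of $\widetilde{\aa}\,\BB\tp$ equals $\sum_{k=1}^{t}\aa_k\BB_{jk}$ for every $j\in\dr{1,\dots,L}$. For $j\leq t$ this coincides with the $j$-th entry of $\aa\,\sigmoid(G_t)\tp$, namely $\sum_{k=1}^{t}\aa_k\,\sigmoid(G_t)_{jk}$, by the block identity above, so the first $t$ coordinates of the two sides agree.

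The decisive step is to show that the $j$-th entry of the right-hand side vanishes for every $j>t$, which is what matches the zero padding $\zero^{1\x(L-t)}$ on the left. For $j>t\geq k$ we have $j>k$, so in both~\eqref{eq:MU1} and~\eqref{eq:MUupperwindow} the entry $[\MM^U]_{jk}=-\infty$, whence $G_{jk}=-\infty$ and $\BB_{jk}=\sigmoid(-\infty)=0$. Therefore $\sum_{k=1}^{t}\aa_k\BB_{jk}=0$ for all $j>t$, matching the padding and completing the identity. The only subtle point is exactly this vanishing of the lower-left block of $\BB$: it is the structural reason the zero-padded full-length product reproduces the truncated $t$-sized product, and it is where the strictly upper-triangular form of $\MM^U$---shared by both mask variants---is essential.
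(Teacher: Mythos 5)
Your proof is correct and follows essentially the same route as the paper's: the paper likewise reduces the identity to (i) $\sigmoid\prbb{\frac{\QQ^U_t {\KK^U_t}\tp}{\sqrt{\dk}} + \MM^U_t}$ being the top-left $t\x t$ block of $\sigmoid\prbb{\frac{\QQ^U {\KK^U}\tp}{\sqrt{\dk}} + \MM^U}$ and (ii) the entries of that matrix with row index exceeding column index vanishing because the mask is $-\infty$ there. Your write-up simply makes explicit the coordinatewise bookkeeping that the paper leaves as ``straightforward.''
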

\begin{proof}[Proof of Lemma~\ref{lem:sigmoidQKM}.]
    The proof is straightforward by the fact that
    \begin{enumerate}
        \item The upper triangular entries of the transposed matrix $\pr{\sigmoid\prbb{\frac{\QQ^U {\KK^U}\tp}{\sqrt{\dk}} + \MM^U}}\tp$ are all 0 by the definition of $\MM^U$.  
        \item The matrix $\sigmoid\prbb{\frac{\QQ^U_t {\KK^U_t}\tp}{\sqrt{\dk}} + \MM^U_t}$ equals an upper-left block of the matrix $\sigmoid\prbb{\frac{\QQ^U {\KK^U}\tp}{\sqrt{\dk}} + \MM^U}$, i.e., 
        \eq{
            \sigmoid\prbb{\frac{\QQ^U_t {\KK^U_t}\tp}{\sqrt{\dk}} + \MM^U_t} 
            = \br{\sigmoid\prbb{\frac{\QQ^U {\KK^U}\tp}{\sqrt{\dk}} + \MM^U}}_{1:t, 1:t},  
        }
        where for any matrix $\AA$, $\AA_{1:t, 1:t}$ refers to its top-left $t$-by-$t$ submatrix.  
        
    \end{enumerate}
\end{proof}

Define the vector $\aa_{t}\in \Real^{1\x L}$ as
$
    [\aa_{t}]_i = \prb{\qq^C_{t} \VV^U_t}_i
$
if $1\leq i\leq t$ and $[\aa_{t}]_i = 0$ otherwise.

Denote $\widetilde{\ss}^U_t = \pr{\ss^U_t, \zero^{1\x (L - t)}}$.  
Then, by combining~\eqref{eq:stequ1} with Lemma~\ref{lem:sigmoidQKM}, we have
\eq{
    \widetilde{\ss}^U_t &= \pr{\ss^U_t, \zero^{1\x (L - t)}} = \pr{\frac{\qq^C_{t} {\VV^U_t}\tp \pr{\sigmoid\prbb{\frac{\QQ^U_t {\KK^U_t}\tp}{\sqrt{\dk}} + \MM^U_t}}\tp}{\sqrt{\dk}}, \zero^{1\x (L - t)}} \\  
    &= \frac{\aa_{t} \pr{\sigmoid\prbb{\frac{\QQ^U {\KK^U}\tp}{\sqrt{\dk}} + \MM^U}}\tp}{\sqrt{\dk}}.  
}

Since $\VV^U_t$ equals the submatrix which consists of first $t$ rows of $\VV^U$, $[\aa_{t}]_i = \prB{\qq^C_{t} {\VV^U}\tp}_i$ for $1\leq i\leq t$. 
Then, by stacking $\prb{\aa_j}$ together, we have
\eq{
    \begin{pmatrix}
        \aa_1 \\
        \aa_2 \\
        \vdots \\
        \aa_{L}
    \end{pmatrix}
    = \pr{\QQ^C {\VV^U}\tp} \odot \widetilde{\MM}^C,     
} 
where $\widetilde{\MM}^C$ is defined in Theorem~\ref{thm:attention1} with $\widetilde{\MM}^{C}_{ij} = 1$ if $i \geq j$ and $\widetilde{\MM}^{C}_{ij} = 0$ otherwise.  

We concatenate $\pr{\widetilde{\ss}^U_t}_{1\leq t\leq L}$ in an $L$-by-$L$ matrix.   
Then, 
\eql{\label{eq:SE1}}{
    \begin{pmatrix}
        \widetilde{\ss}^U_1 \\
        \widetilde{\ss}^U_2 \\
        \vdots \\
        \widetilde{\ss}^U_L
    \end{pmatrix}
    &= 
    \frac{1}{\sqrt{\dk}}
    \begin{pmatrix}
        \aa_1 \\
        \aa_2 \\
        \vdots \\
        \aa_{L}
    \end{pmatrix}
    \pr{\sigmoid\prbb{\frac{\QQ^U {\KK^U}\tp}{\sqrt{\dk}} + \MM^U}}\tp \\  
    &= \pr{\frac{\QQ^C {\VV^U}\tp}{\sqrt{\dk}} \odot \widetilde{\MM}^C} \pr{\sigmoid\prbb{\frac{\QQ^U {\KK^U}\tp}{\sqrt{\dk}} + \MM^U}}\tp \\
    &= \SSU,   
} 
where $\SSU$ is given in~\eqref{eq:R1}.  

The causal-key attention scores $\ss^C_t$ in~\eqref{eq:stC} is
\eq{
    \ss^C_t = \frac{\qq^C_{t}{\KK^C_t}\tp}{\sqrt{\dk}}.  
}
We also denote $\widetilde{\ss}^C_t = \pr{\ss^C_t, (-\infty)^{1\x (L - t)}}$, where $(-\infty)^{1\x (L - t)}$ is a $(L - t)$-dimensional vector with all entries equaling $-\infty$.   
Then, by concatenating $\pr{\widetilde{\ss}^C_t}_{1\leq t\leq L}$ into $\SS^C\in \Real^{L\x L}$, we have
\eql{\label{eq:SC1}}{
    \SS^C = 
    \begin{pmatrix}
        \widetilde{\ss}^C_1 \\
        \widetilde{\ss}^C_2 \\
        \vdots \\
        \widetilde{\ss}^C_L
    \end{pmatrix}
    = \frac{\QQ^C {\KK^C}\tp}{\sqrt{\dk}} + \MM^C.    
} 
Then, the outputs $\Atten(\XX^L)$ satisfies
\eq{
    \Atten\pr{\XX^L} &= 
    \begin{pmatrix}
        \atten\pr{\XX^1} \\
        \atten\pr{\XX^2} \\
        \vdots \\
        \atten\pr{\XX^L}
    \end{pmatrix} 
    =
    \begin{pmatrix}
        \softmax\pr{\ss^C_1 - \silu(\ss^U_1)}\VV^C_1 \\
        \softmax\pr{\ss^C_2 - \silu(\ss^U_2)}\VV^C_2 \\ 
        \vdots \\  
        \softmax\pr{\ss^C_L - \silu(\ss^U_L)}\VV^C_L 
    \end{pmatrix} \\  
    &= 
    \begin{pmatrix}
        \softmax\pr{\widetilde{\ss}^C_1 - \silu(\widetilde{\ss}^U_1)} \\
        \softmax\pr{\widetilde{\ss}^C_2 - \silu(\widetilde{\ss}^U_2)} \\ 
        \vdots \\  
        \softmax\pr{\widetilde{\ss}^C_L - \silu(\widetilde{\ss}^U_L)} 
    \end{pmatrix} 
    \VV^C \\  
    &=
    \rsoftmax\pr{\SS^C - \silu\pr{\SS^U}} \VV^C \\  
    &= \rsoftmax\pr{\frac{\QQ^C{\KK^C}\tp}{\sqrt{\dk}} + \MM^C - \silu\pr{\frac{\SSU}{\sqrt{\dk}}}} \VV^C,   
} 
where the last inequality above is from~\eqref{eq:SE1} and~\eqref{eq:SC1}. 

This completes the proof of Theorem~\ref{thm:attention1}. 

\section{Further Details on Multi-Head CASTLE}\label{sec:prfmultiheadattention} 
\noindent\textbf{Forward pass for multi-head CASTLE.}
Denote $n = \nhead$.  
Given contextualized representations $\XX^L$, 
for each head, we can get $\Atten_i(\XX^L)$ as in~\eqref{eq:Attentiontrain}. 
Then, the outputs of multi-head CASTLE can be obtained by
\eq{
    \Multiheadatten(\XX^L) = \concat\pr{\Atten_1(\XX^L), \dots, \Atten_n(\XX^L)} \WW^O \in \Real^{L\x \dv}.  
}

\noindent\textbf{Parameter count of multi-head CASTLE.} 
In each head, the learnable parameters are $\WW^U_Q$, $\WW^U_K$, $\WW^U_V$, $\WW^C_Q$, $\WW^C_K$, $\WW^C_V\in \Real^{\dhidden\x \dk}$. 
These parameters sum up to $6 \nhead \dhidden \dk$. 

The matrix $\WW^O$ has $\nhead \dk \dhidden $ parameters.
Thus, the multi-head CASTLE has $7\nhead\dk\dhidden$ learnable parameters. 

Multi-head CASTLE-SWL has identical formula and parameter counts with multi-head CASTLE and is omitted for clarity.   

\noindent\textbf{Parameter count of standard causal attention.}
The standard causal attention as in~\eqref{eq:standardcausal2} has learnable parameters $\WW_Q$, $\WW_K$, $\WW_V\in \Real^{\dhidden\x \dk}$ for each head and $\WW^O \in \Real^{n\dk \x \dhidden}$. 
These parameters sum up to $4\nhead\dk\dhidden$.

\section{Efficient Parallel Training Algorithm and Proof of Theorem~\ref{thm:algU1}}\label{sec:prfthmalgU1}

\subsection{Forward Pass}

\begin{algorithm}
    \caption{Efficient parallel forward pass}\label{alg:train}
    \KwRequire{$\QQ^U$, $\KK^U$, $\VV^U$, $\QQ^C$, $\KK^C$, $\VV^C\in \Real^{L\x \dk}$ }
    \# Initialization \;
    Initialize $\DD = \zero^{\dk \x L}$, $\OO = \zero^{L\x \dk}$, 
    $\ell = \zero^{L\x 1}$ and $\mm = (-\infty)^{L\x 1}$ in HBM \;

    \# Diagonal blocks \;
    \For{$j=0, \dots, N-1$ (in parallel)}{
        Load $\QQ^C_{T_j, :}$, $\KK^C_{T_j, :}$, $\QQ^U_{T_j, :}$, $\KK^U_{T_j, :}$, $\VV^U_{T_j, :}$ from HBM to on-chip SRAM \;
        On chip, compute $\SS^U_{T_j, T_j}$ as in~\eqref{eq:SSEdiagonal} \;
        On chip, compute $\AA_{T_j, T_j}$ as in~\eqref{eq:ATij} \;  
        Implement online softmax update for block $(T_j, T_j)$ by calling Algorithm~\ref{alg:softmaxupdate}
    }

    \# $1$-st, $\cdots$, $(N-1)$-th off-diagonal blocks \;  
    \For{$k=1, \dots, N-1$ (sequential)}{
        \For{$j=0, \dots, N - k - 1$ (in parallel)}{
            Load $\DD_{:, T_j}$, $\QQ^U_{T_{j}, :}$, $\QQ^C_{T_{j+k}}$, $\KK^U_{T_{j+k-1}, :}$, $\KK^U_{T_{j+k}, :}$, $\VV^U_{T_{j+k-1}, :}$, $\VV^U_{T_{j+k}, :}$ from HBM to on-chip SRAM \;
            On chip, update $\DD_{:, T_j}$ as in~\eqref{eq:Dupdate} \;
            On chip, compute $\SS^U_{T_{j+k}, T_j}$ as in~\eqref{eq:Rupdate} \;
            Write $\DD_{:, T_j}$ to HBM \;
            On chip, compute $\AA_{T_{j+k}, T_j}$ as in~\eqref{eq:ATij} \;
            Implement online softmax update for block $(T_{j+k}, T_j)$ by calling Algorithm~\ref{alg:softmaxupdate}
        }
    }
    Compute $\OO \leftarrow \diag{\ell}\inv\OO \in \Real^{L\x d}$, $\mm \leftarrow \mm + \log(\ell) \in \Real^{L \x 1}$\; 
    Save $\mm\in \Real^{L\x 1}$, $\DD\in \Real^{\dk \x L}$ for backward pass \;  
    Return the output $\OO\in \Real^{L\x \dk}$  
    
\end{algorithm}

\begin{algorithm}
    \caption{Online softmax update for block $(T_i, T_j)$}\label{alg:softmaxupdate}
    \KwRequire{$\AA_{T_i, T_j}$ on chip, $\ell$, $\mm$, $\OO$, $\VV^C$ in HBM}
    Load $\ell_{T_i}$ and $\mm_{T_i}$ from HBM to on-chip SRAM \;
    On chip, compute $\mm_{T_i}^{\rm new} = \max(\mm_{T_i}, \text{row\_max}(\AA_{T_i, T_j}))$ \;
    On chip, compute $\widetilde{\PP}_{T_i, T_j} = \exp(\AA_{T_i, T_j} - \mm_{T_i}^{\rm new})$ \;
    On chip, compute $\ell_{T_i}^{\rm new} = e^{\mm_{T_i} - \mm_{T_i}^{\rm new}}\ell_{T_i} + \text{row\_sum}(\widetilde{\PP}_{T_i, T_j})$ \;
    Write $\OO_{T_i, :} \leftarrow \diag{e^{\mm_{T_i} - \mm_{T_i}^{\rm new}}} \OO_{T_i, :} + \widetilde{\PP}_{T_i, T_j} \VV^C_{T_j, :}$ to HBM \;
    Write $\ell_{T_i} \leftarrow \ell_{T_i}^{\rm new}$, $\mm_{T_i} \leftarrow \mm_{T_i}^{\rm new}$ to HBM
\end{algorithm}

\begin{figure}[ht!]
    \centering
    \includegraphics[width=0.68\linewidth]{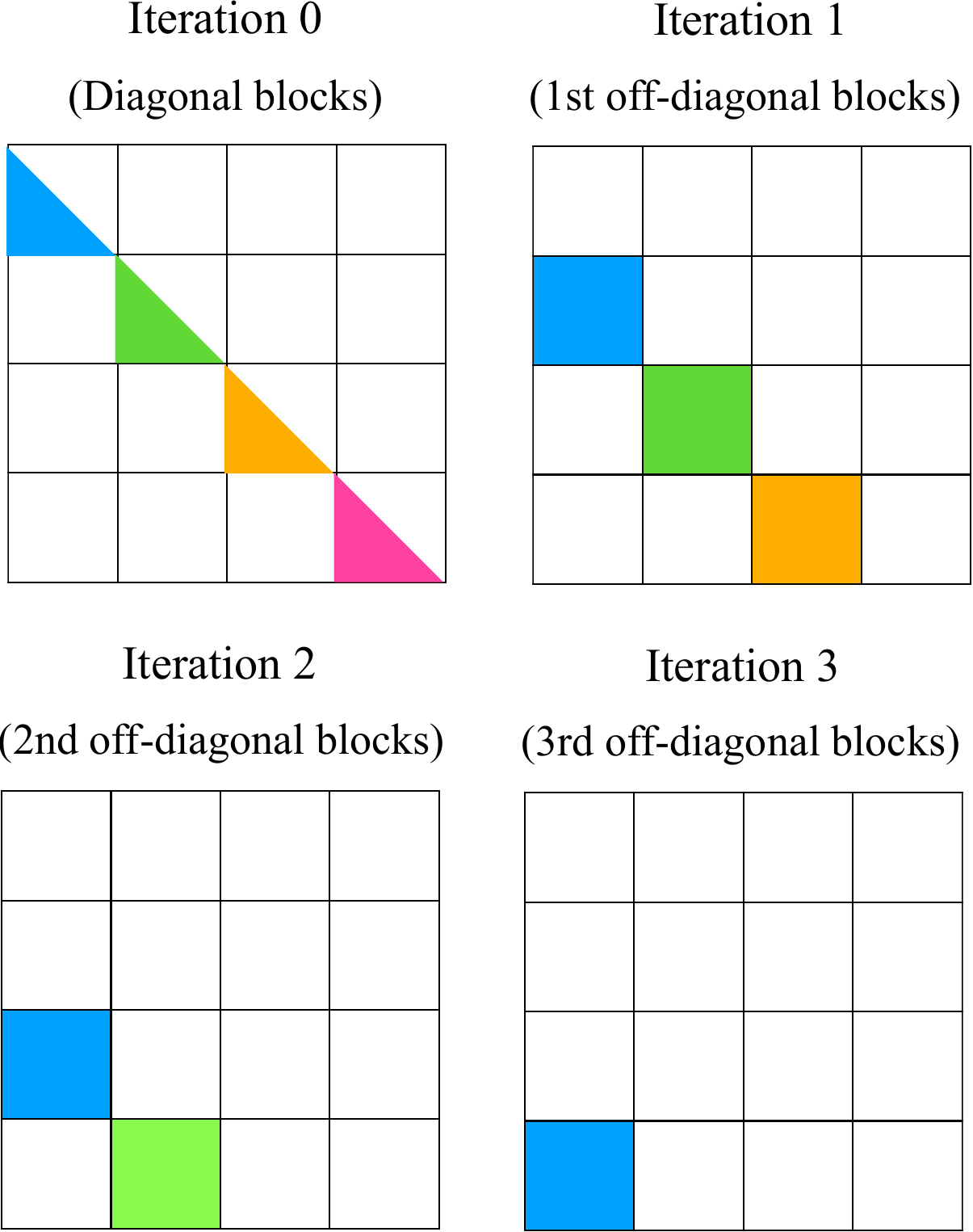}
    \caption{Parallel scheme of Algorithm~\ref{alg:train} (forward pass). We begin by computing the diagonal blocks of the attention score matrix $\AA$ (Iteration 0). In each subsequent iteration $k$, the $k$-th off-diagonal blocks are computed. Blocks with different colors represent different kernel instances. In each iteration, each kernel instance is responsible for computing a single block of $\AA$ and applying online softmax (Algorithm~\ref{alg:softmaxupdate}) to it. Kernel instances within the same iteration are launched in parallel, while the iterations are executed sequentially. The parallel scheme of Algorithm~\ref{alg:traingrad} (backward pass) is the reverse of the forward pass's parallel scheme. }
    \label{fig:workpartition}
\end{figure}

Theorem~\ref{thm:attention1} has shown a matrix form of $\Atten\prn{\XX^L}$ as in~\eqref{eq:Attentiontrain}.
However, computing $\Atten(\XX^L)$ directly from~\eqref{eq:Attentiontrain} still need $O(L^3)$ computational costs because to compute $\SSU$, we need matrix multiplication between $L$-by-$L$ matrices in~\eqref{eq:R1}. 
In this section, we give an efficient algorithm that enables efficient parallel training and reduces computational complexity to $O(L^2 \dk)$. 

We first introduce the notations in this section. 
We divide the sequence $\dr{1, \dots, L}$ into blocks of size $B$. 
For simplicity, we assume that $L$ is divisible by $B$. 
Let $N = \frac{L}{B}$ and $T_i = \dr{i * B + 1, \dots, (i+1)*B}$ for $0\leq i\leq N - 1$. 
Then, $\dr{1, 2, \dots, L} = \cup_{i=0}^{N-1} T_i$. 

For any matrix $\MM$, we use $\MM_{T_i, T_j}$ to denote the submatrix whose row and column indexes are in $T_i$ and $T_j$, respectively.
$\MM_{T_i, :}$ refers to the submatrix whose row indexes are in $T_i$. 
Analogously, $\MM_{:, T_j}$ refers to the submatrix whose column indexes are in $T_j$. 

Denote the attention score matrix $\AA$ by
\eql{\label{eq:defatten1}}{
    \AA = \frac{\QQ^C{\KK^C}\tp}{\sqrt{\dk}} + \MM^C - \silu\pr{\SS^U}.
}
Then, 
\eq{
    \Atten(\XX^L) = \rsoftmax(\AA)\VV^C  
}
and 
for any $0\leq i, j\leq N-1$, the block $\AA_{T_i, T_j}$ satisfies
\eql{\label{eq:ATij}}{
    \AA_{T_i, T_j} = \frac{\QQ^C_{T_i, :}{\KK^C_{T_j, :}}\tp}{\sqrt{\dk}} + \MM^C_{T_i, T_j} - \silu\pr{\SS^U_{T_i, T_j}}.  
}
As in FlashAttention-2~\citep{dao2023flashattention}, we compute each block of $\AA$ and apply online softmax (Algorithm~\ref{alg:softmaxupdate}) on it to obtain $\Atten(\XX^L)$. 
The first term $\frac{\QQ^C_{T_i, :}{\KK^C_{T_j, :}}\tp}{\sqrt{\dk}} $ in         $\AA_{T_i, T_j}$ as given by~\eqref{eq:ATij} can be computed similarly to FlashAttention-2. 
We mainly focus on the second term $-\silu\pr{\SSU_{T_i, T_j}}$ which can incur a total computational complexity of $O(L^3 + L^2d)$ if we apply~\eqref{eq:R1} directly. 
Notice that as in~\eqref{eq:R1}, 
\eq{
    \SSU = \pr{\frac{\QQ^C {\VV^U}\tp}{\sqrt{\dk}} \odot \widetilde{\MM}^{C}} \pr{\sigmoid\pr{\frac{\QQ^U {\KK^U}\tp}{\sqrt{\dk}} + \MM^U}}\tp,  
}
where the term $\pr{\frac{\QQ^C {\VV^U}\tp}{\sqrt{\dk}} \odot \widetilde{\MM}^{C}}$ is a low-rank matrix multiplied by a mask matrix because $\QQ^C {\VV^U}\tp$ has rank at most $\dk$ which is normally much smaller than sequence length $L$. 
This motivates our algorithm to compute $\SSU$ with computational complexity of $O(L^2d)$ while still enabling parallel training.  

First, as the upper triangular entries in $\SSU$ are all $0$,
we only need to focus on lower triangular entries in $\SSU$. 
We divide the computation of $\SSU$ into the following parts
\begin{itemize}
    \item \textbf{diagonal blocks:} $\SSU_{T_j, T_j}$ with $0\leq j\leq N - 1$. 
    \item \textbf{k-th off-diagonal blocks:} $\SSU_{T_{j+k, T_j}}$ with $0\leq j\leq N - 1 - k$
    
\end{itemize}

By the definition of $\SSU$, we can write $\SSU_{T_{j+k}, T_j}$ in a blockwise way as follows
\eql{\label{eq:RTkjTj1}}{
    & \SSU_{T_{j+k}, T_j} = \QQ^C_{T_{j+k}, :} \sum_{i=j}^{j+k-1} \frac{{\VV^U_{T_i, :}}\tp}{\sqrt{\dk}} \pr{\sigmoid\prbb{\frac{\QQ^U_{T_j, :}{\KK^U_{T_{i}, :}}\tp}{\sqrt{\dk}} + \MM^U_{T_j, T_i}}}\tp \\
    & + \pr{\frac{\QQ^C_{T_{j+k}, :} {\VV^U_{T_{j+k}, :}}\tp}{\sqrt{\dk}} \odot \widetilde{\MM}^C_{T_{j+k}, T_{j+k}}} \pr{\sigmoid\prbb{\frac{\QQ^U_{T_{j}, :}{\KK^U_{T_{j+k}, :}}\tp}{\sqrt{\dk}} + \MM^U_{T_{j}, T_{j+k}}}}\tp.       
}
However, computing each $\SSU_{T_{j+k}, T_j}$ this way will lead to $O(L^3)$ computational cost in total. We need more efficient way to reduce training costs.

The key observation in the development of this efficient parallel training algorithm is that the matrix $\pr{\QQ^C {\VV^U}\tp} \odot \widetilde{\MM}^{C}$ is a low-rank matrix multiplied by a mask because $\QQ^C {\VV^U}\tp$ has rank equal to or less than the head dimension $\dk$ rather than the sequence length $L$. 
This enables us to compute $\SSU$ defined in~\eqref{eq:R1} with lower computational costs. 
More specifically, we will rely on the following auxiliary variable to reduce computational costs.

We first init an auxiliary variable 
$\DD^{(0)} = \zero^{\dk\x L}$. 
Then, when computing the $k$-th off-diagonal blocks, we will maintain the following relation that 
\eql{\label{eq:Dk1}}{
    \DD^{(k)}_{:, T_j} = \sum_{i=j}^{j+k-1} {\VV^U_{T_i, :}}\tp \pr{\sigmoid\prbb{\frac{\QQ^U_{T_j, :}{\KK^U_{T_{i}, :}}\tp}{\sqrt{\dk}} + \MM^U_{T_j, T_i}}}\tp.    
}

We will compute the blocks of $\SSU$ in the following order: diagonal blocks, 1st off-diagonal blocks,  2nd off-diagonal blocks, $\cdots$, $(N-1)$-th off-diagonal blocks. 

\noindent\textbf{Diagonal blocks of $\SSU$.} 
For any $0\leq j\leq N-1$, 
\eql{\label{eq:SSEdiagonal}}{
    \SSU_{T_j, T_j} = \pr{\frac{\QQ^C_{T_j, :} {\VV^U_{T_j, :}}\tp}{\sqrt{\dk}} \odot \widetilde{\MM}^C_{T_j, T_j}} \pr{\sigmoid\prbb{\frac{\QQ^U_{T_j, :}{\KK^U_{T_j, :}}\tp}{\sqrt{\dk}} + \MM^U_{T_j, T_j}}}\tp.   
}

\noindent\textbf{1st off-diagonal blocks of $\SSU$.}
For any $0\leq j\leq N-2$, 
update $\DD^{(1)}_{:, T_j}$ as
\eq{
    \DD^{(1)}_{:, T_j} = {\VV^U_{T_j, :}}\tp \pr{\sigmoid\prbb{\frac{\QQ^U_{T_j, :}{\KK^U_{T_j, :}}\tp}{\sqrt{\dk}} +  \MM^U_{T_j, T_j}}}\tp
}
This satisfies~\eqref{eq:Dk1} with $k=1$.  
Then, it follows from~\eqref{eq:RTkjTj1} that  
\eq{
    & \SSU_{T_{j+1}, T_j} = \frac{\QQ^C_{T_{j+1}, :} \DD^{(1)}_{:, T_j}}{\sqrt{\dk}} \\
    & + \pr{\frac{\QQ^C_{T_{j+1}, :} {\VV^U_{T_{j+1}, :}}\tp}{\sqrt{\dk}} \odot \widetilde{\MM}^C_{T_{j+1}, T_{j+1}}} \pr{\sigmoid\prbb{\frac{\QQ^U_{T_{j}, :}{\KK^U_{T_{j+1}, :}}\tp}{\sqrt{\dk}} + \MM^U_{T_{j}, T_{j+1}}}}\tp.       
} 

\noindent\textbf{The $k$-th off-diagonal blocks of $\SSU$.}
If we have already computed the $(k-1)$-th off-diagonal blocks and $\DD^{(k-1)}$ satisfies~\eqref{eq:Dk1} with $k-1$, the $k$-th diagonal blocks can be computed in the following way. 
First, we update $\DD^{(k)}$ as follows
\eql{\label{eq:Dupdate}}{
    \DD^{(k)}_{:, T_j} = \DD^{(k-1)}_{:, T_j} + {\VV^U_{T_{j+k-1}, :}}\tp \pr{\sigmoid\prbb{\frac{\QQ^U_{T_j, :}{\KK^U_{T_{j+k-1}, :}}\tp}{\sqrt{\dk}} + \MM^U_{T_j, T_{j+k-1}}}}\tp.  
} 
By induction hypothesis~\eqref{eq:Dk1}, $\DD^{(k)}_{:, T_j}$ also satisfies~\eqref{eq:Dk1} with $k$. 
Then, it follows by~\eqref{eq:RTkjTj1} that 
\eql{\label{eq:Rupdate}}{
    & \SSU_{T_{j+k}, T_j} = \frac{\QQ^C_{T_{j+k}, :}\DD^{(k)}_{:, T_j}}{\sqrt{\dk}}  \\
    & + \pr{\frac{\QQ^C_{T_{j+k}, :} {\VV^U_{T_{j+k}, :}}\tp}{\sqrt{\dk}} \odot \widetilde{\MM}^C_{T_{j+k}, T_{j+k}}} \pr{\sigmoid\prbb{\frac{\QQ^U_{T_{j}, :}{\KK^U_{T_{j+k}, :}}\tp}{\sqrt{\dk}} +  \MM^U_{T_{j}, T_{j+k}}}}\tp.       
} 

We can compute all $k$-th off-diagonal blocks in the above way. 
When computing each $\SSU_{T_{j+k}, T_j}$, we only need to update $\DD^{(k)}$ and then compute $\SSU_{T_{j+k}, T_j}$ as~\eqref{eq:Rupdate}. 
Both~\eqref{eq:Dupdate} and~\eqref{eq:Rupdate} take $O(L^2d)$ FLOPs. 

Next, we describe the design of a parallel algorithm. The parallelization scheme must satisfy the following requirements:

\begin{itemize}
\item $\SSU_{T_{j+k}, T_j}$ must be computed after $\SSU_{T_{j+k-1}, T_j}$ because computing $\SSU_{T_{j+k}, T_j}$ in~\eqref{eq:Rupdate} requires $\DD^{(k)}$ which is derived by updating $\DD^{(k-1)}$ in~\eqref{eq:Dupdate}. Therefore, $\AA_{T_{j+k}, T_j}$ should be computed after $\AA_{T_{j+k-1}, T_j}$.  
\item To ensure correctness, online softmax cannot be applied simultaneously to $\AA_{T_i, T_j}$ and $\AA_{T_i, T_k}$ for $j \ne k$.
\end{itemize}

To meet these constraints, we launch kernel instances to compute attention outputs with respect to blocks in the following order: starting with the diagonal blocks, followed by the first off-diagonal blocks, then the second, and so on up to the $(N-1)$-th off-diagonal blocks. This parallel execution strategy for Algorithm~\ref{alg:train} is illustrated in Figure~\ref{fig:workpartition}.

\subsection{Backward Pass}
In this section, we introduce the backward pass algorithm for efficient parallel training. 
It is mainly derived by reversing the forward pass. 

Recall that in the forward pass, we compute the blocks in the following order: diagonal blocks, 1st off-diagonal blocks, 2nd off-diagonal blocks, $\cdots$, $(N-1)$-th off-diagonal blocks. 
Then, in the backward pass, we compute the derives in the inverse order as follows: $(N-1)$-th off-diagonal blocks, $(N-2)$-th off-diagonal blocks, $\cdots$, $1$-st off-diagonal blocks, diagonal blocks. 

\begin{algorithm}
    \caption{Efficient parallel backward pass}\label{alg:traingrad}
    \KwRequire{$\de\OO\in \Real^{L\x \dhidden}$}
    \# Initialization \;
    Initialize $\de\DD = \zero^{\dk \x L}$, $\de\QQ^U =\zero^{L\x \dk}$, $\de\KK^U =\zero^{L\x \dk}$, $\de\VV^U =\zero^{L\x \dk}$, $\de\QQ^C =\zero^{L\x \dk}$, $\de\KK^C =\zero^{L\x \dk}$, $\de\VV^C =\zero^{L\x \dk}$\;
    Let $\mm\in \Real^{L\x 1}$, $\DD\in \Real^{\dk \x L}$ be the tensors saved in forward pass (Algorithm~\ref{alg:train}) \;  

    \# Preprocess \;
    Compute $\Delta\in \Real^{L\x 1}$ with $\Delta_i = \de\OO_{i, :}\tp \OO_{i, :}$ \;  

    \# $(N-1)$-th, $\cdots$, $1$st off-diagonal blocks \;  
    \For{$k=N-1, \dots, 1$ (sequential)}{
        \For{$j=0, \dots, N - k - 1$ (in parallel)}{
            Compute $\SS^U_{T_{j+k}, T_j}$ as in~\eqref{eq:Rupdate} and $\AA_{T_{j+k}, T_j}$ as in~\eqref{eq:ATij} \; 
            Compute $\PP_{T_{j+k}, T_j}$ and $\de\PP_{T_{j+k}, T_j}$ as in~\eqref{eq:Pbwd} and~\eqref{eq:dPbwd} \; 
            Compute $\de\SS^C_{T_{j+k}, T_j}$ and $\de\SSU_{T_{j+k}, T_j}$ as in~\eqref{eq:dSSCbwd} and~\eqref{eq:dSSUbwd} \;  
            Update $\de\VV^C_{T_j, :}$, $\de\QQ^C_{T_{j+k}, :}$, $\de\KK^C_{T_j, :}$ as in~\eqref{eq:dVbwd} and~\eqref{eq:dQKCbwd} \; 
            Update $\de\QQ^C_{T_{j+k}, :}$, $\de\DD^{(k)}_{:, T_j}$, $\de\VV^U_{T_{j+k}, :}$, $\de\QQ^U_{T_{j}, :}$, $\KK^U_{T_{j+k}, :}$ as in~\eqref{eq:dQKDVSSUbwd} \; 
            Update $\de\VV^U_{T_{j+k-1}, :}$, $\de\QQ^U_{T_j, :}$, $\de\KK^U_{T_{j+k-1}, :}$ as in~\eqref{eq:dVQDbwd} \; 
            Compute $\DD^{(k-1)}_{:, T_j}$ as in~\eqref{eq:Dupdate2}  
        }
    }

    \# Diagonal blocks \;
    \For{$j=0, \dots, N-1$ (in parallel)}{
        Compute $\SS^U_{T_{j}, T_j}$ as in~\eqref{eq:SSEdiagonal} and $\AA_{T_{j}, T_j}$ as in~\eqref{eq:ATij} \; 
        Compute $\PP_{T_{j}, T_j}$ and $\de\PP_{T_{j}, T_j}$ as in~\eqref{eq:Pbwddiagonal} and~\eqref{eq:dPbwddiagonal} \; 
        Compute $\de\SS^C_{T_{j}, T_j}$ and $\de\SSU_{T_{j}, T_j}$ as in~\eqref{eq:dSSCbwddiagonal} and~\eqref{eq:dSSUbwddiagonal} \; 
        Update $\de\VV^C_{T_j, :}$, $\de\QQ^C_{T_j, :}$, $\de\VV^U_{T_j, :}$, $\de\QQ^U_{T_j, :}$, $\de\KK^U_{T_j, :}$ as in~\eqref{eq:dVbwddiagonal} and~\eqref{eq:dQKVdiagonal}  
    }
    Return the gradients $\de\QQ^U$, $\de\KK^U$, $\de\VV^U$, $\de\QQ^C$, $\de\KK^C$, $\de\VV^C \in \Real^{L\x \dk}$  
    
\end{algorithm}

As in~\citep{dao2022flashattention}, we first preprocess $\Delta \in \Real^{L \x 1}$ with
\eq{
    \Delta_{i} = \de\OO_{i, :}\tp \OO_{i, :}.  
}
Let $\DD^{(N-1)}$ be the $\DD$ saved for backward pass in Algorithm~\ref{alg:train}. 
Also, let $\mm$ be the vector $\mm$ saved for backward pass in Algorithm~\ref{alg:train}. 
Set $\de\DD^{(N-1)} = \zero^{\dk \x L}$.  

Then, we iterate over $k = N-1, \dots, 1$. 
In each iteration, we compute the corresponding gradients and update $\DD^{(k)}$ to $\DD^{(k-1)}$ inversely as in the forward pass. 
Thus, for each $k$, before we launch kernels for the $k$-th off-diagonal blocks, we already have $\DD^{(k)}$.  

\noindent\textbf{$k$-th off-diagonal blocks.} 
For any $0\leq j \leq N-1-k$, 
we first compute $\SS^U_{T_{j+k}, T_j}$ as in~\eqref{eq:Rupdate} and $\AA_{T_{j+k}, T_j}$ as in~\eqref{eq:ATij}. 
Recall that in the end of the forward pass (Algorithm~\ref{alg:train}), we have set $\mm \leftarrow \mm + \log(\ell)$.  
Then, the attention weights can be computed by  
\eql{\label{eq:Pbwd}}{
    \PP_{T_{j+k}, T_j} = \exp\pr{\AA_{T_{j+k}, T_k} - \mm_{T_{j+k}}}. 
}
Compute the gradient of attention weights
\eql{\label{eq:dPbwd}}{
    \de\PP_{T_{j+k}, T_j} = \de\OO_{T_{j+k}, :} \VV_{T_j, :}\tp.  
}
Then, update the gradient of $\VV^C_{T_j, :}$ as follows
\eql{\label{eq:dVbwd}}{
    \de\VV^C_{T_j, :} \leftarrow \de\VV^C_{T_j, :} + \PP_{T_{j+k}, T_j}\tp \de\OO_{T_{j+k}, :}.    
} 
Then, compute the gradients of attention weights 
\eql{\label{eq:dSSCbwd}}{
    \de\SS^C_{T_{j+k}, T_j} = \PP_{T_{j+k}, T_j} \odot \pr{\de\PP_{T_{j+k}, T_j} - \Delta_{T_{j+k}}} 
}
and 
\eql{\label{eq:dSSUbwd}}{
    \de\SSU_{T_{j+k}, T_j} = - \nabla\silu\prb{\SSU_{T_{j+k}, T_j}} \odot \PP_{T_{j+k}, T_j} \odot \pr{\de\PP_{T_{j+k}, T_j} - \Delta_{T_{j+k}}}.  
}
Then, we update the gradients of $\QQ^C_{T_{j+k}, :}$, $\KK^C_{T_j, :}$ through the backward pass of~\eqref{eq:ATij} as follows  
\eql{\label{eq:dQKCbwd}}{
    \de\QQ^C_{T_{j+k}, :} \leftarrow& \de\QQ^C_{T_{j+k}, :} + \frac{\de\SS^C_{T_{j+k}, T_j} \KK^C_{T_j, :}}{\sqrt{\dk}}, \\
    \de\KK^C_{T_j, :} \leftarrow& \de\KK^C_{T_j, :} + \frac{{\de\SS^C_{T_{j+k}, T_j}}\tp \QQ^C_{T_{j+k}, :}}{\sqrt{\dk}}.  
}
Then, we update the gradients of $\QQ^C_{T_{j+k}, :}$, $\DD^{(k)}_{:, T_j}$, $\VV^U_{T_{j+k}, :}$, $\QQ^U_{T_{j}, :}$, $\KK^U_{T_{j+k}, :}$ through the backward pass of~\eqref{eq:Rupdate} as follows
\eql{\label{eq:dQKDVSSUbwd}}{
    &\de\QQ^C_{T_{j+k}, :} \leftarrow \de\QQ^C_{T_{j+k}, :} + \frac{\de\SSU_{T_{j+k}, T_j} {\DD^{(k)}_{:, T_j}}\tp}{\sqrt{\dk}} +  \frac{\EE\VV^U_{T_{j+k}, :}}{\sqrt{\dk}} \\
    &\de\DD^{(k)}_{:, T_j} \leftarrow \de\DD^{(k)}_{:, T_j} + \frac{{\QQ^C_{T_{j+k}, :}}\tp \de\SSU_{T_{j+k}, T_j}}{\sqrt{\dk}} \\
    &\de\VV^U_{T_{j+k}, :} \leftarrow \de\VV^U_{T_{j+k}, :} +  \frac{\EE\tp \QQ^C_{T_{j+k}, :}}{\sqrt{\dk}} \\ 
    &\de\QQ^U_{T_{j}, :} \leftarrow \de\QQ^U_{T_{j}, :} +  \frac{\FF \KK^U_{T_{j+k}, :}}{\sqrt{\dk}} \\ 
    &\de\KK^U_{T_{j+k}, :} \leftarrow \de\KK^U_{T_{j+k}, :} + \frac{\FF\tp \QQ^U_{T_{j}, :}}{\sqrt{\dk}},    
} 
where auxiliary matrices 
\eq{
    \EE &= \pr{\de\SSU_{T_{j+k}, T_j} \sigmoid\prbb{\frac{\QQ^U_{T_{j}, :}{\KK^U_{T_{j+k}, :}}\tp}{\sqrt{\dk}} + \MM^U_{T_{j}, T_{j+k}}}} \odot \widetilde{\MM}^C_{T_{j+k}, T_{j+k}}, \\
    \FF &= \pr{\pr{\de\SSU_{T_{j+k}, T_j}}\tp \pr{\frac{\QQ^C_{T_{j+k}, :} {\VV^U_{T_{j+k}, :}}\tp}{\sqrt{\dk}} \odot \widetilde{\MM}^C_{T_{j+k}, T_{j+k}}}} \\
    &\quad\quad \odot \nabla \sigmoid\prbb{\frac{\QQ^U_{T_{j}, :}{\KK^U_{T_{j+k}, :}}\tp}{\sqrt{\dk}} + \MM^U_{T_{j}, T_{j+k}}}.  
}
Then, we update the gradients of $\VV^U_{T_{j+k-1}, :}$, $\QQ^U_{T_j, :}$, $\KK^U_{T_{j+k-1}, :}$ from the backward pass of~\eqref{eq:Dupdate} as follows
\eql{\label{eq:dVQDbwd}}{
    &\de\VV^U_{T_{j+k-1}, :} \leftarrow \de\VV^U_{T_{j+k-1}, :} \\
    & \quad\quad\quad\quad\quad\quad + \prbb{\sigmoid\prbb{\frac{\QQ^U_{T_j, :}{\KK^U_{T_{j+k-1}, :}}\tp}{\sqrt{\dk}} + \MM^U_{T_j, T_{j+k-1}}}}\tp \pr{\de\DD^{(k)}_{:, T_j}}\tp \\
    &\de\QQ^U_{T_j, :} \leftarrow \de\QQ^U_{T_j, :} +  \frac{\GG \KK^U_{T_{j+k-1}, :}}{\sqrt{\dk}} \\
    &\de\KK^U_{T_{j+k-1}, :} \leftarrow \de\KK^U_{T_{j+k-1}, :} + \frac{\GG\tp \QQ^U_{T_j, :}}{\sqrt{\dk}}
}
where the auxiliary matrix
\eq{
    \GG = \pr{\pr{\de\DD^{(k)}_{:, T_j}}\tp {\VV^U_{T_{j+k-1}, :}}\tp}   \odot \nabla \sigmoid\prbb{\frac{\QQ^U_{T_j, :}{\KK^U_{T_{j+k-1}, :}}\tp}{\sqrt{\dk}} + \MM^U_{T_j, T_{j+k-1}}}.    
}
After updating gradients, we set $\de\DD^{(k-1)}_{:, T_j} \leftarrow \de\DD^{(k)}_{:, T_j}$ and get $\DD^{(k-1)}_{:, T_j}$ back from $\DD^{(k)}_{:, T_j}$ by reversing~\eqref{eq:Dupdate} as follows 
\eql{\label{eq:Dupdate2}}{
    \DD^{(k-1)}_{:, T_j} = \DD^{(k)}_{:, T_j} - {\VV^U_{T_{j+k-1}, :}}\tp \pr{\sigmoid\prbb{\frac{\QQ^U_{T_j, :}{\KK^U_{T_{j+k-1}, :}}\tp}{\sqrt{\dk}} + \MM^U_{T_j, T_{j+k-1}}}}\tp.  
} 

\noindent\textbf{Diagonal blocks.}
For any $0\leq j \leq N - 1$, 
we first compute $\SSU_{T_{j}, T_j}$ as in~\eqref{eq:SSEdiagonal} and $\AA_{T_{j}, T_j}$ as in~\eqref{eq:ATij}.   
Then, compute the attention weights 
\eql{\label{eq:Pbwddiagonal}}{
    \PP_{T_{j}, T_j} = \exp\pr{\AA_{T_{j}, T_j  } - \mm_{T_{j}}}.  
}
Compute the gradient of attention weights
\eql{\label{eq:dPbwddiagonal}}{
    \de\PP_{T_{j}, T_j} = \de\OO_{T_{j}, :} \VV_{T_j, :}\tp.  
}
Then, update the gradient of $\VV^C_{T_j, :}$ as follows
\eql{\label{eq:dVbwddiagonal}}{
    \de\VV^C_{T_j, :} \leftarrow \de\VV^C_{T_j, :} + \PP_{T_{j}, T_j}\tp \de\OO_{T_{j}, :}.    
} 
Then, compute the gradients of attention weights 
\eql{\label{eq:dSSCbwddiagonal}}{
    \de\SS^C_{T_{j}, T_j} = \PP_{T_{j}, T_j} \odot \pr{\de\PP_{T_{j}, T_j} - \Delta_{T_{j}}} 
}
and 
\eql{\label{eq:dSSUbwddiagonal}}{
    \de\SSU_{T_{j}, T_j} = - \nabla\silu\prb{\SSU_{T_{j}, T_j}} \odot \PP_{T_{j}, T_j} \odot \pr{\de\PP_{T_{j}, T_j} - \Delta_{T_{j}}}.  
}
Then, we update the gradients of $\QQ^C_{T_j, :}$, $\VV^U_{T_j, :}$, $\QQ^U_{T_j, :}$, $\KK^U_{T_j, :}$   through the backward pass of~\eqref{eq:SSEdiagonal} as follows
\eql{\label{eq:dQKVdiagonal}}{
    &\de\QQ^C_{T_j, :} \leftarrow \de\QQ^C_{T_j, :} + \frac{\EE  \VV^U_{T_j, :}}{\sqrt{\dk}}, \\ 
    &\de\VV^U_{T_j, :} \leftarrow \de\VV^U_{T_j, :} + \frac{\EE\tp \QQ^C_{T_j, :}}{\sqrt{\dk}}, \\ 
    &\de\QQ^U_{T_j, :} \leftarrow \de\QQ^U_{T_j, :} + \frac{\FF  \KK^U_{T_j, :}}{\sqrt{\dk}}, \\ 
    &\de\KK^U_{T_j, :} \leftarrow \de\KK^U_{T_j, :} + \frac{\FF\tp \QQ^U_{T_j, :}}{\sqrt{\dk}},  
}
where auxiliary matrices
\eq{
    \EE &= \prbb{\de\SSU_{T_j, T_j} \sigmoid\prbb{\frac{\QQ^U_{T_j, :}{\KK^U_{T_j, :}}\tp}{\sqrt{\dk}} + \MM^U_{T_j, T_j}}} \odot \widetilde{\MM}^C_{T_j, T_j}, \\   
    \FF &= \prbb{\pr{\de\SSU_{T_j, T_j}}\tp \prbb{\frac{\QQ^C_{T_j, :} {\VV^U_{T_j, :}}\tp}{\sqrt{\dk}} \odot \widetilde{\MM}^C_{T_j, T_j}}} \odot \nabla \sigmoid\prbb{\frac{\QQ^U_{T_j, :}{\KK^U_{T_j, :}}\tp}{\sqrt{\dk}} + \MM^U_{T_j, T_j}}.     
}

Then, the pseudo-code of backward pass is illustrated in Algorithm~\ref{alg:traingrad}. 
The backward pass's parallel scheme is naturally the reverse of the forward pass's parallel scheme.  
We remark that when computing the gradients with respect to the block $(T_{j+k}, T_j)$, we update both $\VV^U_{T_{j+k}, :}$ and $\VV^U_{T_{j+k-1}, :}$. Consequently, the block $\VV^U_{T_{j+k-1}, :}$ receives contributions from two sources: $(T_{j+k}, T_j)$ and $(T_{j+k-1}, T_{j-1})$. To prevent overlapping updates, we introduce two auxiliary variables for storing intermediate results in $\de\VV^U$, namely $\de\widehat{\VV}^U$ and $\de\widetilde{\VV}^U$. Specifically, in block $(T_{j+k}, T_j)$, the update to $\de\VV^U_{T_{j+k}, :}$ is accumulated in $\de\widehat{\VV}^U$, while the update to $\de\VV^U_{T_{j+k-1}, :}$ is accumulated in $\de\widetilde{\VV}^U$. After all gradients with respect to off-diagonal blocks have been computed, we obtain the true gradient of $\de\VV^U$ by $\de\VV^U \leftarrow \de\widehat{\VV}^U + \de\widetilde{\VV}^U$. 
The same procedure is applide to $\de\KK^U$.

\section{Efficient Inference with UQ-KV Cache}\label{sec:inferencealg}

\begin{proof}[Proof of~\eqref{eq:Uupdate}.]
    By~\eqref{eq:Ut}, 
    \eq{
        \uu^t_s = \sum_{j=1}^{t}  \sigmoid\prbb{\frac{ \qq^U_s {\kk^U_j}\tp}{\sqrt{\dk}} + [\MM^U_t]_{sj}} \vv^U_j.       
    }
    Thus, for $1\leq s < t$,  
    \eq{
        \uu^t_s - \uu^{t-1}_s = \sigmoid\prbb{\frac{\qq^U_s {\kk^U_{t}}\tp}{\sqrt{\dk}} + [\MM^U_t]_{st}} \vv^U_{t}.  
    }
    Since $[\MM^U_t]_{t, t} = 0$, the last row of $\UU^t$ is all-zeros.  
    This yields~\eqref{eq:Uupdate}.  
\end{proof}

\begin{algorithm}[ht!]
    \caption{Prefilling Algorithm}\label{alg:prefill}
    \KwRequire{$\QQ^U$, $\KK^U$, $\VV^U$, $\QQ^C$, $\KK^C$, $\VV^C \in \Real^{L\x \dk}$ }

    \# Get UQ-KV Cache \;
    Initialize $\UU = \zero^{L \x \dk}$ \;
    \For{$k=0, \dots, N-1$ (in parallel)}{
        Load $\QQ^U_{T_{k}, :}$ from HBM to on-chip SRAM \;
        \For{$j=k, \dots, N-1$ (sequential)}{
            Load $\KK^U_{T_{j}}$, $\VV^U_{T_{j}, :}$, $\UU_{T_{j}, :}$ from HBM to on-chip SRAM \;  
            On chip, compute $\AA^U_{T_k, T_j} = \sigmoid\pr{\frac{\QQ^U_{T_k, :} {\KK^U_{T_j, :}}\tp}{\sqrt{\dk}} + \MM^U_{T_k, T_j}}$ \;
            On chip, $\UU_{T_k, :} \leftarrow \UU_{T_k, :} + \AA^U_{T_k, T_j} \VV^U_{T_j, :}$ \;
            Write $\UU_{T_k, :}$ to HBM
        }
    }
    Save UQ-KV cache $\UU$, $\QQ^U$, $\KK^C$, $\VV^C\in \Real^{L\x \dk}$  \;

    Call Algorithm~\ref{alg:train} to get $\OO$ \;  
    Return $\OO \in \Real^{L\x \dk}$
    
\end{algorithm}

\begin{algorithm}[ht!]
    \caption{Decoding Algorithm (generating the $(t+1)$-th token)}\label{alg:decode}
    \KwRequire{representation $\xx_{t} \in \Real^{1\x \dhidden}$, UQ-KV cache $\UU^{t-1}$, $\QQ^U_{t-1}$, $\KK^C_{t-1}$, $\VV^C_{t-1} \in \Real^{(t-1)\x \dk}$ }

    Compute $\qq^U_{t} = \xx_{t}\WW^U_Q$, $\kk^U_{t} = \xx_{t}\WW^U_K$, $\vv^U_{t} = \xx_{t}\WW^U_V$ \; 
    Compute $\qq^C_{t} = \xx_{t}\WW^C_Q$, $\kk^C_{t} = \xx_{t}\WW^C_K$, $\vv^C_{t} = \xx_{t}\WW^C_V$ \; 
    Update $\UU^t$ as in~\eqref{eq:Uupdate} \;
    Update $\QQ^U_{t} = [\QQ^U_{t-1}; \qq^U_{t}]$ \;
    Update $\KK^C_{t} = [\KK^C_{t-1}; \kk^C_{t}]$ \;
    Update $\VV^C_{t} = [\VV^C_{t-1}; \vv^C_{t}]$ \;
    Compute $\ss^C_t = \frac{\qq^C_{t} {\KK^C_t}\tp}{\sqrt{\dk}} \in \Real^{1\x t}$ as in~\eqref{eq:stC} \;
    Compute $\ss^U_t = {\frac{\qq^C_{t} {\UU^t}\tp}{\sqrt{\dk}} } \in \Real^{1\x t}$ as in~\eqref{eq:stE} \;
    Compute $\pp_t = \softmax\pr{\ss^C_t - \silu\pr{\ss^U_t}} \in \Real^{1\x t}$ as in~\eqref{eq:ptsoftmax1} \;
    Compute $\oo_t = \pp_t \VV^C_t \in \Real^{1\x \dk}$ as in~\eqref{eq:headt} \;
    
    Return $\oo_t \in \Real^{1\x \dk}$ and UQ-KV cache $\UU^t$, $\QQ^U_t$, $\KK^C_t$, $\VV^C_t\in \Real^{t\x \dk}$  
    
\end{algorithm}

\end{document}